\newcommand{\commentout}[1]{}
\newcommand{\junk}[1]{}
\declaretheorem[name=Theorem,refname={Theorem,Theorems},Refname={Theorem,Theorems}]{theorem}
\declaretheorem[name=Lemma,refname={Lemma,Lemmas},Refname={Lemma,Lemmas},sibling=theorem]{lemma}
\newcommand{\cA}{\mathcal{A}}
\newcommand{\cG}{\mathcal{G}}
\newcommand{\cL}{\mathcal{L}}
\newcommand{\cN}{\mathcal{N}}
\newcommand{\cS}{\mathcal{S}}
\newcommand{\cT}{\mathcal{T}}
\newcommand{\cV}{\mathcal{V}}
\newcommand{\cX}{\mathcal{X}}
\newcommand{\realset}{\mathbb{R}}
\newcommand{\parents}{\mathsf{pa}}
\newcommand{\children}{\mathsf{ch}}
\newcommand{\E}[2]{\mathbb{E}_{#1} \left[#2\right]}
\newcommand{\condE}[2]{\mathbb{E} \left[#1 \,\middle|\, #2\right]}
\newcommand{\prob}[1]{\mathbb{P} \left(#1\right)}
\newcommand{\condprob}[2]{\mathbb{P} \left(#1 \,\middle|\, #2\right)}
\newcommand{\condvar}[2]{\mathrm{var} \left[#1 \,\middle|\, #2\right]}
\newcommand{\abs}[1]{\left|#1\right|}
\newcommand*\dif{\mathop{}\!\mathrm{d}}
\newcommand{\I}[1]{\mathds{1} \! \left\{#1\right\}}
\newcommand{\maxnorm}[1]{\|#1\|_\infty}
\newcommand{\normw}[2]{\|#1\|_{#2}}
\newcommand{\set}[1]{\left\{#1\right\}}
\newcommand{\T}{^\top}
\DeclareMathOperator*{\argmax}{arg\,max\,}
\mathchardef\mhyphen="2D
\def\Regret{\mathcal{R}}
\def\Bregret{\mathcal{BR}}
\newcommand{\hierts}{\ensuremath{\tt HierTS}\xspace}
\newcommand{\hiertstwo}{\ensuremath{\tt FlatTS}\xspace}
\newcommand{\ts}{\ensuremath{\tt TS}\xspace}
\icmltitlerunning{Deep Hierarchy in Bandits}
\begin{document}

\twocolumn[
\icmltitle{Deep Hierarchy in Bandits}

\icmlsetsymbol{equal}{*}

\begin{icmlauthorlist}
\icmlauthor{Joey Hong}{b}
\icmlauthor{Branislav Kveton}{a}
\icmlauthor{Sumeet Katariya}{a}
\icmlauthor{Manzil Zaheer}{dm}
\icmlauthor{Mohammad Ghavamzadeh}{gr}
\end{icmlauthorlist}

\icmlaffiliation{a}{Amazon}
\icmlaffiliation{b}{University of California, Berkeley}
\icmlaffiliation{dm}{DeepMind}
\icmlaffiliation{gr}{Google Research}

\icmlcorrespondingauthor{Joey Hong}{joey\_hong@berkeley.edu}

\vskip 0.3in
]

\printAffiliationsAndNotice{}

\begin{abstract}
Mean rewards of actions are often correlated. The form of these correlations may be complex and unknown a priori, such as the preferences of a user for recommended products and their categories. To maximize statistical efficiency, it is important to leverage these correlations when learning. We formulate a bandit variant of this problem where the correlations of mean action rewards are represented by a \emph{hierarchical Bayesian model} with latent variables. Since the hierarchy can have multiple layers, we call it \emph{deep}. We propose a hierarchical Thompson sampling algorithm (\hierts) for this problem, and show how to implement it efficiently for Gaussian hierarchies. The efficient implementation is possible due to a novel exact hierarchical representation of the posterior, which itself is of independent interest. We use this exact posterior to analyze the Bayes regret of \hierts in Gaussian bandits. Our analysis reflects the structure of the problem, that the regret decreases with the prior width, and also shows that hierarchies reduce the regret by non-constant factors in the number of actions. We confirm these theoretical findings empirically, in both synthetic and real-world experiments.
\end{abstract}

\section{Introduction}
\label{sec:introduction}

A \emph{contextual bandit} \citep{li10contextual,chu11contextual} is a sequential decision-making  problem where a \emph{learning agent} sequentially interacts with an environment over $n$ rounds. In each round, the agent observes a \emph{context}, chooses one of $K$ possible \emph{actions}, and then receives a \emph{reward} for the taken action. The agent aims to maximize its expected cumulative reward over $n$ rounds. It does not know the mean rewards of the actions \emph{a priori} and learns them by taking the actions. This forces the agent to choose between \emph{exploring} actions to learn about them and \emph{exploiting} the action with the highest estimated reward. As an example, in online shopping, the context can be a user's query, the actions are products, and the reward is a purchase indicator \cite{yue11linear,li16contextual}.

In many practical problems, the action space is large and cannot be explored naively. It is also not immediately obvious what a good generalization over the actions would be. However, the mean rewards of the actions are correlated, which presents an opportunity for more statistically-efficient exploration. For instance, in online shopping, many products are semantically similar and can be organized into a hierarchy: both a keyboard and monitor are computer accessories; and both computer accessories and home theatre systems are electronic devices. However, such hierarchies are not easy to represent in traditional bandit algorithms \citep{auer02finitetime,chapelle11empirical,kawale15efficient,sen17contextual}. Another example is classification with a bandit feedback, where the labels are clustered: the car and truck are vehicles, while the monkey and tiger are animals. We experiment with such a problem in \cref{sec:image classification}.

To address this issue, we study a bandit problem with a \emph{deep hierarchical structure} in the action space. This structure is represented using a \emph{hierarchical Bayesian model} \citep{lindley1972bayes,zhang17survey}, where each action is a leaf node in the associated tree. Each node has a \emph{node parameter}, and nodes with the same parent have their parameters drawn i.i.d.\ from a distribution parameterized by their parent's parameter. Because of this, exploration of one action teaches the agent about other actions, depending on how far they are in the tree.

We make the following contributions. First, we formalize the hierarchical Bayesian model $\cT$ of our environment. Second, we propose a \emph{Thompson sampling (TS)} algorithm on $\cT$ and call it \hierts. The main novelty in \hierts is a factorization of the posterior along $\cT$, which permits \emph{exact posterior sampling} and \emph{computationally-efficient updates}. A closed-form solution exists in Gaussian bandits and contextual linear bandits with Gaussian rewards. We derive a Bayes regret bound for \hierts that reflects the structure of $\cT$ and the impact of priors. The bound improves upon vanilla TS in a polynomial factor in the number of actions, and thus shows increased statistical efficiency due to the hierarchy. We validate these findings empirically and also apply \hierts to a challenging classification problem with label hierarchy.

\section{Setting}
\label{sec:setting}

We use the following notation. Random variables are capitalized. For any positive integer $n$, we denote by $[n]$ the set $\{1, 2, \hdots, n\}$. We let $\I{\cdot}$ be the indicator function. The $i$-th entry of vector $v$ is $v_i$; unless the vector $v_j$ is already indexed, in which case we write $v_{j, i}$.

We consider a learning agent that interacts with a contextual bandit over $n$ rounds \citep{li10contextual,chu11contextual}. In round $t \in [n]$, the agent observes \emph{context} $X_t \in \cX \subseteq \realset^d$, takes an \emph{action} $A_t$ from an \emph{action set} $\cA$ of size $K$, and then observes a \emph{stochastic reward} $Y_t = r(X_t, A_t) + \epsilon$, where $r: \realset^d \times \cA \to \realset$ is a \emph{reward function} and $\epsilon$ is independent $\sigma^2$-sub-Gaussian noise.

Our problem is structured. In particular, the action set $\cA$ progressively breaks into finer clusters of actions with similar rewards. This decomposition is induced by a tree $\cT$ (\cref{fig:hierarchy}) over nodes $\cV \subset \mathbb{Z}$. Each \emph{leaf} of $\cT$ corresponds to an action $a \in \cA$ and we call it an \emph{action node}. Each \emph{internal node} of $\cT$ has at least $2$ and at most $b$ children, where $b$ is the \emph{branching factor}. The \emph{height of the tree} is $h$ and the \emph{height of node} $i \in \cV$ is $h_i \in [0, h]$. The height of the leaves is $0$ and that of the root is $h$. Without loss of generality, the root has index $1$. For any node $i \in \cV$, we denote its parent by $\parents(i)$ and its children by $\children(i)$. An \emph{ancestor} of node $i$ is any node on a direct path from node $i$ to the root, and node $i$ is its \emph{descendant}. With a slight abuse of notation, we use $\cA \subseteq \cV$ to refer to both the action set and leaves of $\cT$, and sometimes index action nodes by $a$ to stress their role.

The reward function is parameterized by \emph{model parameters} $\Theta = (\theta_i)_{i \in \cV}$, where $\theta_i$ is the parameter of node $i$. The \emph{true model parameters} are $\Theta_* = (\theta_{*, i})_{i \in \cV}$ and we assume that they are generated as
\begin{align}
  \theta_{*, 1}
  & \sim P_{0, 1}\,,
  \label{eq:hierarchy} \\
  \theta_{*, i} \mid \theta_{*, \parents(i)}
  & \sim P_{0, i}(\cdot \mid \theta_{*, \parents(i)})\,, 
  & \forall i \in \cV \setminus \{1\}\,,
  \nonumber \\
  Y_t \mid X_t, \theta_{*, A_t} 
  & \sim P(\cdot \mid X_t; \theta_{*, A_t})\,,
  & \forall t \in [n]\,.
  \nonumber
\end{align}
Here $P_{0, 1}$ is the prior distribution of the root node, which we call a \emph{hyper-prior}, $P_{0, i}(\cdot \mid \theta_{*, i})$ is the \emph{conditional prior distribution} of node $i$, which is parameterized by the sampled value of its parent $\theta_{*, \parents(i)}$, and $P(\cdot \mid x; \theta_{*, a})$ is the \emph{reward distribution} of action $a$ in context $x$. We denote the mean reward of action $a$ in context $x$ under model parameters $\Theta$ by $r(x, a; \Theta)$, and relate it to the reward distribution as $\E{Y \sim P(\cdot \mid x; \theta_a)}{Y} = r(x, a; \Theta)$. Thus $r(x, a; \Theta)$ depends only on one parameter in $\Theta$. The generative process in \eqref{eq:hierarchy} relates any two node parameters to each other, through the lowest common ancestor. This induces complex correlations that can be used for efficient exploration. We discuss motivating examples for this setting in \cref{sec:introduction}.

The goal is to minimize the $n$-round regret defined as
\begin{align*}
  \Regret(n; \Theta_*)
  = \E{}{\sum_{t = 1}^n r(X_t, A_{t, *}; \Theta_*) - r(X_t, A_t; \Theta_*)}\,,
\end{align*}
where $A_{t, *} = \argmax_{a \in \cA} r(X_t, a; \Theta_*)$ is the optimal action in round $t$ given context $X_t$. In this work, we assume that the parameters $\Theta_*$ are also random. We define the $n$-round \emph{Bayes regret} as $\Bregret(n) = \E{}{\Regret(n; \Theta_*)}$, which takes an additional expectation over $\Theta_*$. While weaker than a traditional frequentist regret $\Regret(n; \Theta_*)$, the Bayes regret is a practical performance measure, when the average performance across multiple instances of model parameters is of interest \cite{russo14learning,hong20latent}.

\begin{figure}[t]
  \centering
  \includegraphics[width=.9\linewidth]{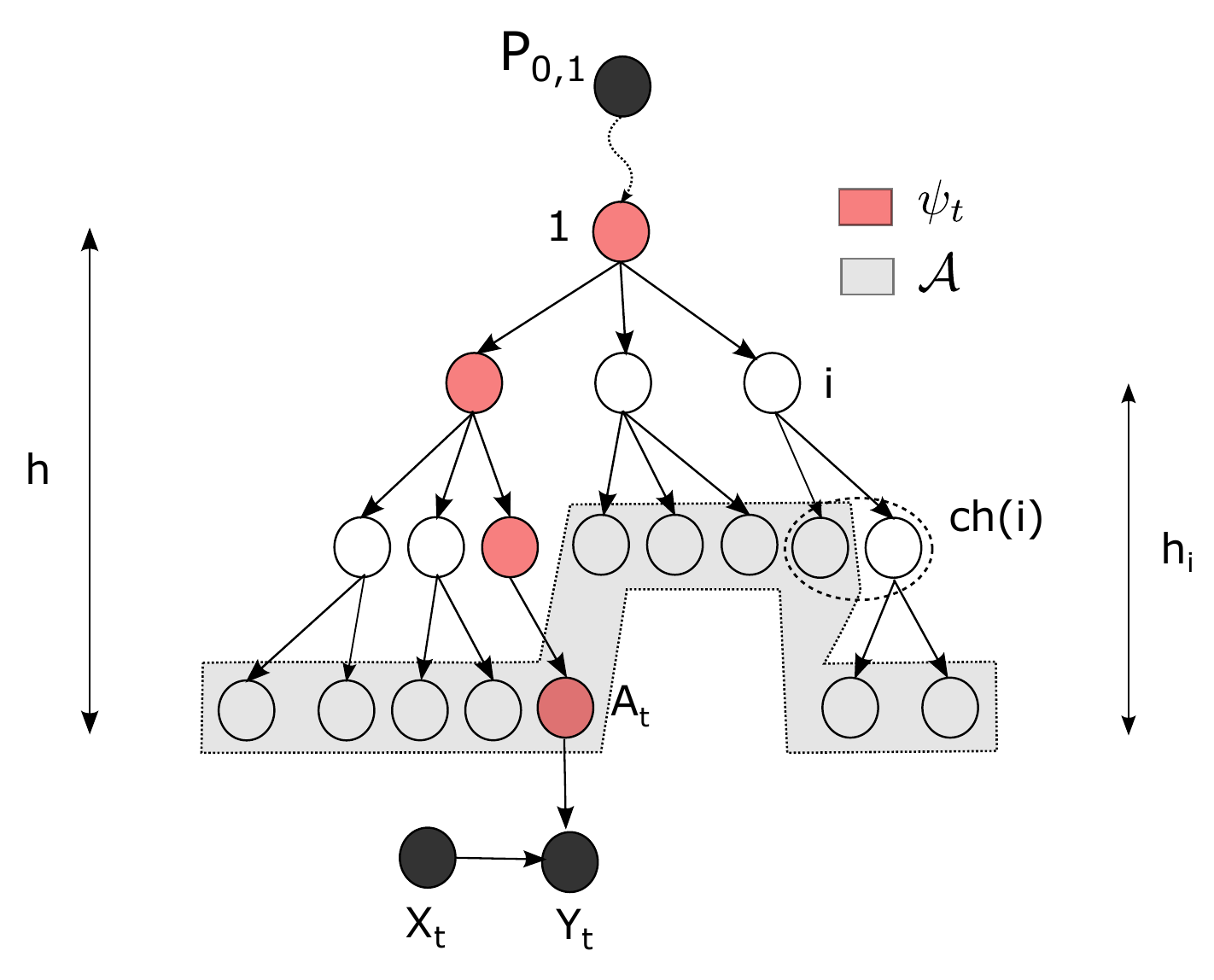}
  \vspace{-0.1in}
  \caption{Graphical model of our environment. The drawing depicts our notation: children $\children(i)$ of node $i$, action nodes $\cA$, and updated nodes $\psi_t$ after action $A_t$ is taken. The height of the tree is $h = 3$ and that of node $i$ is $h_i = 2$.}
  \label{fig:hierarchy}
\end{figure}

\section{Algorithm}
\label{sec:algorithm}

Since our environment is a graphical model (\cref{fig:hierarchy}), we explore using \emph{Thompson sampling (TS)} \citep{thompson33likelihood,chapelle11empirical,agrawal12analysis,russo14learning}. The main challenge in our algorithm design are \emph{latent variables}. Specifically, the rewards of action nodes are observed and permit direct learning of $\theta_{*, a}$ for $a \in \cA$. However, parameters $\theta_{*, i}$ of the internal nodes $i \in \cV \setminus \cA$ are only indirectly observed through their descendant action nodes, and thus are latent.

It is unclear if modeling of latent variables is needed. To see this, let $H_t = (X_\ell, A_\ell, Y_\ell)_{\ell \in [t - 1]}$ be the \emph{history} of all interactions of the agent until round $t$ and $\Theta_{*, \cA} = (\theta_{*, a})_{a \in \cA}$ be the true model parameters corresponding to the action nodes. Since the mean rewards of actions depend only on $\Theta_{*, \cA}$, the most natural approach is to implement TS with posterior sampling of $\Theta_{*, \cA} \mid H_t$. This approach has two challenges. First, the exact posterior involves complex correlations, due to the dependencies in $\theta_{*, a}$ induced by the generative process in \eqref{eq:hierarchy}. These correlations remain when the latent variables are marginalized out. Thus exact sampling from $\Theta_{*, \cA} \mid H_t$ may be computationally inefficient. Second, the uncertainty of each $\theta_{*, a} \mid H_t$ could be modeled individually. While computationally efficient, this may not be sound and would not be statistically efficient. In contrast, we propose \emph{exact sampling} from $\Theta_{*, \cA} \mid H_t$ that is both \emph{computationally and statistically efficient}.

\subsection{Hierarchical Sampling}
\label{sec:hierarchical sampling}

Our approach is based on hierarchical sampling \citep{andrieu03introduction,doucet01montecarlo}, where the model parameters $\Theta$ are sampled similarly to the generative process in \eqref{eq:hierarchy}. To explain it, we introduce the following notation. For the root node, $P_{t, 1}(\theta) = \prob{\theta_{*, 1} = \theta \mid H_t}$ denotes the posterior distribution of its parameter in round $t$, which we also call a \emph{hyper-posterior}. For any other node $i$,
\begin{align}
  P_{t, i}(\theta \mid \theta_p)
  = \prob{\theta_{*, i} = \theta \mid \theta_{*, \parents(i)} = \theta_p, H_{t, i}}
  \label{eq:posterior}
\end{align}
is the posterior distribution of its parameter conditioned on $\theta_{*, \parents(i)} = \theta_p$ in round $t$, where $H_{t, i}$ is a subset of interactions in $H_t = (X_\ell, A_\ell, Y_\ell)_{\ell \in [t - 1]}$ where $A_\ell$ is a descendant of node $i$. In \eqref{eq:posterior}, $H_t$ could be replaced by $H_{t, i}$ because the posterior of $\theta_{*, i}$ is independent of the other observations given the value of the parent parameter $\theta_p$. This structure is critical to the computational efficiency of our approach and is also used in the regret analysis (\cref{sec:analysis}).

\begin{algorithm}[t]
  \caption{\hierts: Hierarchical Thompson sampling.}
  \label{alg:hierts}
  \begin{algorithmic}
    \STATE {\bfseries Input:} Tree $\cT$ with height $h$, all priors $P_{0, \cdot}$ in \eqref{eq:hierarchy}
    \STATE Initialize all posteriors $P_{1, \cdot} \gets P_{0, \cdot}$
    \FOR{$t = 1, \dots, n$}
      \STATE Sample $\theta_{t, 1} \sim P_{t, 1}$
      \FOR{$\ell = h - 2, \dots, 0$}
      \FOR{$i \in \cV_\ell$}
        \STATE Sample $\theta_{t, i} \sim P_{t, i}(\cdot \mid \theta_{t, \parents(i)})$
      \ENDFOR
      \ENDFOR
      \STATE $\Theta_t \gets (\theta_{t, i})_{i \in \cV}$
      \STATE Take action $A_t \gets \argmax_{a \in \cA} r(X_t, a; \Theta_t)$
      \STATE Observe $Y_t \sim P(\cdot \mid X_t; \theta_{*, A_t})$
      \STATE Compute new posteriors $P_{t + 1, \cdot}$
    \ENDFOR
  \end{algorithmic}
\end{algorithm}

Assuming that all posteriors $P_{t, i}$ can be computed efficiently, it is trivial to propose a hierarchical Thompson sampling algorithm for our problem. We call it \hierts and present its pseudo-code in \cref{alg:hierts}. In round $t$, \hierts works as follows. First, we sample the root parameter $\theta_{t, 1}$. After that, we iterate over all nodes and sample node parameters whose parents are already sampled. Specifically, we define $\cV_\ell = \{i \in \cV: h_i = \ell\}$ as the subset of nodes at height $\ell$ and then sample $\theta_{i, t}$ for $i \in \cV_\ell$, from the children of the root at height $\ell = h - 2$ to the leaves at $\ell = 0$. By definition, $\Theta_t = (\theta_{t, i})_{i \in \cV}$ is a valid posterior sample, generated hierarchically. Finally, \hierts takes an optimistic action with respect to $\Theta_t$, observes $Y_t$, and updates its posterior.

Note that \hierts samples parameters at all action nodes. It is possible to leverage the tree structure to prune sub-trees with actions that are unlikely to have high mean rewards. For example, \citet{sen2021topk} propose beam search over a tree to only evaluate a subset of actions. Such computational improvements can be easily incorporated into \hierts. We view them as orthogonal to our main contribution, which is statistically-efficient exploration using the tree structure.

\subsection{Efficient Posterior Computation}
\label{sec:efficient posterior computation}

The main technical novelty in \hierts is that the posteriors $P_{t, i}$ can be maintained efficiently. We show it as follows.

Fix any node $i$, its value $\theta$, and the value of its parent $\theta_p$. By Bayes rule, we have
\begin{align}
  P_{t, i}(\theta \mid \theta_p)
  \propto \cL_{t, i}(\theta) P_{0, i}(\theta \mid \theta_p)\,,
  \label{eq:efficient posterior}
\end{align}
where $\cL_{t, i}(\theta) = \prob{H_{t, i} \mid \theta_{*, i} = \theta}$ is the likelihood of observations $H_{t, i}$ whose ancestor is node $i$, given its value $\theta$. Note that $\cL_{t, i}(\theta)$ can be further decomposed as
\begin{align}
  \textstyle
  \cL_{t, i}(\theta)
  = \prod_{j \in \children(i)} \tilde{\cL}_{t, j}(\theta)\,,
  \label{eq:data likelihood}
\end{align}
where $\tilde{\cL}_{t, j}(\theta) = \prob{H_{t, j} \mid \theta_{*, \parents(j)} = \theta}$ is the likelihood of observations $H_{t, j}$ whose ancestor is child node $j$, given that the value of its parent is $\theta$. This identity follows from two facts. First, $\theta_{*, j}$ are conditionally independent of each other given $\theta_{*, i} = \theta$. Second, $H_{t, j}$ depends on $\theta_{*, i}$ only through $\theta_{*, j}$. Loosely speaking, each $\tilde{\cL}_{t, j}(\theta)$ in \eqref{eq:data likelihood} can be viewed as the likelihood of an aggregate observation at node $j$, from all leaves that descend from node $j$, under the hypothesis that $\theta_{*, i} = \theta$ (\cref{sec:mab posteriors}).

Finally, each $\tilde{\cL}_{t, j}(\theta)$ can be computed as
\begin{align}
  \tilde{\cL}_{t, j}(\theta)
  = \int_{\theta'}
  \cL_{t, j}(\theta') P_{0, j}(\theta' \mid \theta) \dif \theta'\,,
  \label{eq:child likelihood}
\end{align}
where $\cL_{t, j}(\theta')$ is the likelihood of observations $H_{t, j}$ whose ancestor is node $j$ given its value $\theta'$. Note that $\cL_{t, j}(\theta')$ can be further decomposed as in \eqref{eq:data likelihood}, which gives rise to our recursive computation of the posterior.

\begin{algorithm}[t]
  \caption{Statistics update after round $t$. The dot notation means that the likelihoods are updated for all parameter values, which is possible in Gaussian models (\cref{sec:posteriors}).}
  \label{alg:update}
  \begin{algorithmic}
    \STATE Initialize $\cL_{t + 1, \cdot} \gets \cL_{t, \cdot}$
    \STATE $i \gets A_t$
    \STATE $\cL_{t + 1, i}(\cdot)
    \gets P(Y_t \mid X_t; \cdot) \cL_{t, i}(\cdot)$
    \REPEAT 
      \STATE $i \gets \parents(i)$
      \STATE $\cL_{t + 1, i}(\cdot)
      \gets \prod_{j \in \children(i)} \tilde{\cL}_{t + 1, j}(\cdot)$
      \IF{$i > 1$}
        \STATE $\tilde{\cL}_{t + 1, i}(\cdot)
        \gets \int_\theta
        \cL_{t + 1, i}(\theta) P_{0, i}(\theta \mid \cdot) \dif \theta$
      \ENDIF
   \UNTIL{$i = 1$}
  \end{algorithmic}
\end{algorithm}

The pseudo-code for updating $\cL_{t, i}$ and $\tilde{\cL}_{t, i}$ after round $t$ is shown in \cref{alg:update}. After that, \eqref{eq:efficient posterior} has to be recomputed for all nodes $i$ on the path from $A_t$ to the root. In general, \eqref{eq:child likelihood} is hard to compute due to the integral over $\theta'$. However, in Gaussian graphical models (\cref{sec:posteriors}), this can be done in a closed form. In practice, \eqref{eq:child likelihood} can be approximated for arbitrary distributions using approximate inference, either variational or MCMC \citep{doucet01montecarlo}.

\section{Gaussian Hierarchy}
\label{sec:posteriors}

In this section, we instantiate the environment in \eqref{eq:hierarchy} as a hierarchical Gaussian model \citep{koller09probabilistic} and derive its posterior. The model is defined as
\begin{align}
  \theta_{*, 1}
  & \sim \cN(\mu_1, \Sigma_{0, 1})\,,
  \label{eq:gaussian hierarchy} \\
  \theta_{*, i} \mid \theta_{*, \parents(i)}
  & \sim \cN(\theta_{*, \parents(i)}, \Sigma_{0, i})\,,
  & \forall i \in \cV \setminus \{1\}\,,
  \nonumber \\
  Y_t \mid X_t, \theta_{*, A_t} 
  & \sim \cN(X_t\T \theta_{*, A_t}, \sigma^2)\,,
  & \forall t \in [n]\,.
  \nonumber
\end{align}
Here $\theta_{*, i} \in \realset^d$ are node parameters and $\Sigma_{0, i}$ are covariance matrices that control the closeness of $\theta_{*, i}$ and $\theta_{*, \parents(i)}$. The mean reward of action $a$ in context $x$ is $r(x, a; \Theta) = x\T \theta_a$. The hierarchical structure is motivated by multi-label classification \citep{prabhu2018parabel,yu2020pecos}, where $X_t$ is a feature vector, $A_t$ is its predicted label, and $Y_t$ indicates if the label is correct. We return to this application in \cref{sec:image classification}. When $d = 1$ and $X_t = 1$, we recover a $K$-armed Gaussian bandit, where $\theta_{*, a}$ is the mean reward of action $a$. We assume that the agent knows the hyper-prior mean $\mu_1$, all covariances $\Sigma_{0, i}$, and noise $\sigma$. These assumptions are only used in the regret analysis, where we assume exact posterior sampling. In our experiments (\cref{sec:image classification}), we learn these quantities from data.

The special case of multi-armed bandits also shows computational gains over naive posterior sampling. Specifically, due to the dependencies in \eqref{eq:hierarchy}, $\Theta_{*, \cA} \mid H_t$ is a multivariate Gaussian with a $K \times K$ covariance matrix. Sampling from it requires $O(K^3)$ time, and can be done by computing the root of the covariance matrix and then multiplying it by a vector of $K$ standard normal variables. In contrast, sampling in \hierts requires only $O(|\cV|)$ time. When each internal node of $\cT$ has at least $2$ children, which is without loss of generality, $|\cV| \leq 2 K$ and our computational gain is $O(K^2)$. Now we present closed-form posteriors for hierarchies of Gaussian and contextual linear models.

\subsection{Multi-Armed Bandit}
\label{sec:mab posteriors}

We start with a $K$-armed Gaussian bandit. In this setting, each node $i \in \cV$ is associated with a single scalar parameter $\theta_{*, i} \in \realset$, and its initial uncertainty is described by conditional prior variance $\Sigma_{0, i} = \sigma_{0, i}^2 \in \realset$. The posteriors for this model are derived in \cref{sec:mab posterior derivation} and stated below.

The posterior of $\theta_{*, i}$ conditioned on $\theta_{*, \parents(i)} = \theta_p$, where $\theta_p$ is any scalar, is $P_{t, i}(\theta \mid \theta_p) = \cN(\theta; \hat{\theta}_{t, i}, \hat{\sigma}_{t, i}^2)$, where
\begin{align}
  \hat{\sigma}_{t, i}^{-2}
  & = \sigma_{0, i}^{-2} +
  \sum_{j \in \children(i)} \tilde{\sigma}_{t, j}^{-2}\,,
  \label{eq:mab_posterior_root} \\
  \hat{\theta}_{t, i}
  & = \hat{\sigma}_{t, i}^2 \bigg(\sigma_{0, i}^{-2} \theta_p +
  \sum_{j \in \children(i)} \tilde{\sigma}_{t, j}^{-2} \tilde{\theta}_{t, j}\bigg)\,.
  \nonumber
\end{align}
When $i = 1$ is the root, $\theta_p = \mu_1$. The child parameters $\tilde{\theta}_{t, j}$ and $\tilde{\sigma}_{t, j}$ are computed recursively as follows. If node $j$ is an action node, then
\begin{align}
  \tilde{\sigma}_{t, j}^2
  = \sigma_{0, j}^2 + \frac{\sigma^2}{\abs{\cS_{t, j}}}\,, \quad
  \tilde{\theta}_{t, j}
  = \frac{1}{\abs{\cS_{t, j}}} \sum_{\ell \in \cS_{t, j}} Y_\ell\,,
  \label{eq:mab_likelihood_leaf}
\end{align}
where $\cS_{t, j} = \{\ell < t: A_\ell = j\}$ are the rounds where action $j$ is taken before round $t$. If node $j$ is a non-action node,
\begin{align}
  \!\!\!\!\!\!
  \tilde{\sigma}_{t, j}^2
  = \sigma_{0, j}^2 + M^{-1}\,, \quad
  \tilde{\theta}_{t, j}
  = M^{-1} \sum_{k \in \children(j)} \tilde{\sigma}_{t, k}^{-2} \tilde{\theta}_{t, k}\,,
  \label{eq:mab_likelihood}
\end{align}
where $M = \sum_{k \in \children(j)} \tilde{\sigma}_{t, k}^{-2}$. The new child parameters $\tilde{\theta}_{t, k}$ and $\tilde{\sigma}_{t, k}$ are computed recursively, using either \eqref{eq:mab_likelihood_leaf} or \eqref{eq:mab_likelihood}.

At a high level, the recursive update follows from the observation that $\tilde{\cL}_{t, j}(\theta) \propto \exp\left[- \frac{1}{2} \tilde{\sigma}_{t, j}^{-2} (\theta - \tilde{\theta}_{t, j})^2\right]$ holds for any node $j$ and the value of its parent $\theta$. The closed-form of $P_{t, i}(\theta \mid \theta_p)$ is a direct combination of this result and the derivations in \cref{sec:efficient posterior computation}.

The recursive update in \eqref{eq:mab_likelihood} has an intuitive interpretation. Although we get Gaussian observations at action nodes, as in \eqref{eq:mab_likelihood_leaf}, they propagate to higher nodes in the tree through \eqref{eq:mab_likelihood}. In turn, these nodes act as noisy observations of their parents with mean $\tilde{\theta}_{t, j}$ and variance $\tilde{\sigma}_{t, j}^2$. This allows us to overcome latent variables in our model. The posterior in \eqref{eq:mab_posterior_root} is just a function of higher-level observations in all children of node $i$. To have closed forms of these quantities, we rely heavily on the properties of Gaussian random variables.

\subsection{Contextual Linear Bandit}
\label{sec:linear posteriors}

We now consider the general case in \eqref{eq:gaussian hierarchy}. This model can be viewed as a hierarchy of linear models \citep{yue11linear,abbasi-yadkori11improved} indexed by actions. We adopt the notation that $\Lambda = \Sigma^{-1}$, where $\Lambda$ is the precision for covariance matrix $\Sigma$. The posteriors for this model are derived in \cref{sec:linear posterior derivation} and stated below.

The posterior of $\theta_{*, i}$ conditioned on $\theta_{*, \parents(i)} = \theta_p$, where $\theta_p$ is any vector, is $P_{t, i}(\theta \mid \theta_p) = \cN(\theta; \hat{\theta}_{t, i}, \hat{\Sigma}_{t, i})$, where
\begin{align}
  \hat{\Lambda}_{t, i}
  & = \Lambda_{0, i} +
  \sum_{j \in \children(i)} \tilde{\Lambda}_{t, j}\,,
  \label{eq:linear_posterior_root} \\
  \hat{\theta}_{t, i}
  & = \hat{\Sigma}_{t, i} \bigg(\Lambda_{0, i} \theta_p +
  \sum_{j \in \children(i)} \tilde{\Lambda}_{t, j} \tilde{\theta}_{t, j}\bigg)\,.
  \nonumber
\end{align}
When $i = 1$ is the root, $\theta_p = \mu_1$. The child parameters $\tilde{\theta}_{t, j}$ and $\tilde{\Lambda}_{t, j}$ are computed recursively as follows. If node $j$ is an action node, then
\begin{align*}
  \tilde{\Sigma}_{t, j}
  = \Sigma_{0, j} + G_{t, j}^{-1}\,, \quad
  \tilde{\theta}_{t, j}
  = \sigma^{-2} G_{t, j}^{-1} \sum_{\ell \in \cS_{t, j}} X_\ell Y_\ell\,,
\end{align*}
where $\cS_{t, j} = \{\ell < t: A_\ell = j\}$ are the rounds where action $j$ is taken before round $t$ and $G_{t, j} = \sigma^{-2} \sum_{\ell \in \cS_{t, j}} X_\ell\T X_\ell$ is the outer product of the corresponding feature vectors. If node $j$ is a non-action node, then
\begin{align*}
  \tilde{\Sigma}_{t, j}
  = \Sigma_{0, j} + M^{-1}\,, \quad
  \tilde{\theta}_{t, j}
  = M^{-1} \sum_{k \in \children(j)} \tilde{\Lambda}_{t, j} \tilde{\theta}_{t, k}\,,
\end{align*}
where $M = \sum_{k \in \children(j)} \tilde{\Lambda}_{t, k}$. The new child parameters $\tilde{\theta}_{t, k}$ and $\tilde{\Lambda}_{t, k}$ are computed recursively, depending on whether $k$ is an action node or not.

At a high level, the recursive update follows from the fact that $\tilde{\cL}_{t, j}(\theta) \propto \exp\left[- \frac{1}{2} (\theta - \tilde{\theta}_{t, j})\T \tilde{\Lambda}_{t, j} (\theta - \tilde{\theta}_{t, j})\right]$ holds for any node $j$ and the value of its parent $\theta$. The closed-form of $P_{t, i}(\theta \mid \theta_p)$ is a direct combination of this result and the derivations in \cref{sec:efficient posterior computation}. As in \cref{sec:mab posteriors}, our recursive updates can be viewed as propagating observations from action nodes to higher nodes in the tree.

\section{Analysis}
\label{sec:analysis}

The analysis is for the Gaussian model in \cref{sec:mab posteriors}. We present the key lemmas, the main result, and discuss them. All proof are deferred to \cref{sec:regret bound proofs}.

\subsection{Key Steps in the Analysis}
\label{sec:key steps}

We start with the observation that the hierarchical posterior sampling in \cref{sec:hierarchical sampling} is just an efficient implementation of joint posterior sampling over the action node parameters $\Theta_\cA$. Since our model is a \emph{Gaussian graphical model}, this posterior is a multivariate Gaussian \citep{koller09probabilistic}. This is because any conditioning or marginalization does not change the model class. This observation allows us to prove the following lemma.

\begin{lemma}
\label{lem:bayes regret} For any $\delta > 0$, the Bayes regret of \hierts is bounded as 
\begin{align*}
  \Bregret(n)
  \leq \sqrt{2 n \cG(n) \log(1 / \delta)} +
  \sqrt{2 / \pi} \sigma_{\max} K n \delta\,,
\end{align*}
where $\cG(n) = \E{}{\sum_{t = 1}^n \sigma_{t, A_t}^2}$ denotes a \emph{complexity term}, $\sigma_{t, A_t}^2 = \condvar{\theta_{A_t}}{H_t}$ is the marginal posterior variance of the mean reward of action $A_t$ in round $t$, and $\sigma_{\max}$ is the maximum marginal prior width at an action node.
\end{lemma}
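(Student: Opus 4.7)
The plan is to adapt the standard UCB-surrogate analysis for the Bayes regret of Thompson sampling to our hierarchical setting. The key enabling structural fact, flagged in the paragraph preceding the lemma, is that the joint posterior $\Theta_{*,\cA}\mid H_t$ is multivariate Gaussian because a Gaussian graphical model is closed under conditioning and marginalization. In particular, each marginal $\theta_{*,a}\mid H_t$ is $\cN(\mu_{t,a},\sigma_{t,a}^2)$, where $\mu_{t,a}=\condE{\theta_{*,a}}{H_t}$ and $\sigma_{t,a}^2$ is as defined in the lemma. This is what lets us invoke Gaussian tail bounds on each marginal despite the rich hierarchical dependencies.

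I would introduce the upper confidence bound $U_t(a)=\mu_{t,a}+c\,\sigma_{t,a}$ with $c=\sqrt{2\log(1/\delta)}$, and use the defining TS property that $A_t$ and $A_{t,*}$ are identically distributed given $H_t$ (the posterior sample producing $A_t$ has, conditional on $H_t$, the same law as $\Theta_*$). Since $U_t$ is $H_t$-measurable, $\condE{U_t(A_t)}{H_t}=\condE{U_t(A_{t,*})}{H_t}$, and adding and subtracting $U_t$ yields the decomposition
\begin{align*}
\condE{\theta_{*,A_{t,*}}-\theta_{*,A_t}}{H_t}
&= \condE{\theta_{*,A_{t,*}}-U_t(A_{t,*})}{H_t}\\
&\quad + \condE{U_t(A_t)-\theta_{*,A_t}}{H_t}.
\end{align*}
For the exploration term, $\condE{\theta_{*,A_t}}{H_t,A_t}=\mu_{t,A_t}$, because $A_t$ depends on $\Theta_*$ only through the posterior sample, which is conditionally independent of $\Theta_*$ given $H_t$; hence this term equals $c\,\condE{\sigma_{t,A_t}}{H_t}$.

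For the concentration term I would take positive parts and union-bound over actions, $\condE{(\theta_{*,A_{t,*}}-U_t(A_{t,*}))^+}{H_t}\leq\sum_{a\in\cA}\condE{(\theta_{*,a}-U_t(a))^+}{H_t}$, and bound each summand using the truncated Gaussian first moment $\E{Z\sim\cN(0,\sigma^2)}{(Z-c\sigma)^+}\leq \sigma\,e^{-c^2/2}/\sqrt{2\pi}$. With $e^{-c^2/2}=\delta$, this gives a per-round contribution of order $K\sigma_{\max}\delta$; tracking the exact Gaussian constant (including the two-sided deviation picked up when combining with the exploration side) produces the stated $\sqrt{2/\pi}\,\sigma_{\max}Kn\delta$ after summing over $t$. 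Summing the exploration term over $t$ and applying Cauchy--Schwarz,
\begin{align*}
c\sum_{t=1}^n\E{}{\sigma_{t,A_t}}
&\leq c\sqrt{n}\sqrt{\sum_{t=1}^n\E{}{\sigma_{t,A_t}^2}}\\
&= \sqrt{2n\,\cG(n)\log(1/\delta)},
\end{align*}
yields the first term in the claim.

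The part I expect to require the most care is justifying joint Gaussianity of $\Theta_*\mid H_t$ in our multi-layer hierarchy with latent internal-node parameters: it follows by unrolling the conditional-Gaussian definition in \eqref{eq:gaussian hierarchy} and observing that conditioning on the linear-Gaussian observations in $H_t$ preserves joint Gaussianity (and hence each marginal), but because this is the structural input that makes the entire tail argument valid, I would verify it explicitly rather than take it for granted. The rest is essentially a bookkeeping exercise that mirrors classical TS analyses, with the hierarchical structure entering only through how the individual $\sigma_{t,a}$'s --- and therefore $\cG(n)$ --- shrink with $t$.
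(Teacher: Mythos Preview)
Your proposal is correct and follows the classical Russo--Van Roy UCB-surrogate template: add and subtract $U_t(a)=\mu_{t,a}+c\,\sigma_{t,a}$, use posterior matching to swap $A_{t,*}$ for $A_t$ on the $H_t$-measurable part, then bound the concentration side by a union bound and the truncated-Gaussian first moment $\E{}{(Z-c)^+}\leq\phi(c)$. The paper takes a cousin route: it adds and subtracts the posterior \emph{mean} $\hat{\Theta}_t$ (not the UCB), uses conditional independence of $A_t$ and $\Theta_*$ given $H_t$ to kill $\condE{A_t^\top(\hat{\Theta}_t-\Theta_*)}{H_t}$ outright, and then splits the surviving term $A_*^\top(\Theta_*-\hat{\Theta}_t)$ on the confidence event $E_t=\{\forall a:\ |a^\top(\Theta_*-\hat{\Theta}_t)|\leq c\,\normw{a}{\hat{\Sigma}_t}\}$, bounding the complement by a Gaussian tail integral. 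Your approach is a shade more direct and in fact delivers the sharper residual constant $1/\sqrt{2\pi}$ rather than $\sqrt{2/\pi}$ (your aside about a ``two-sided deviation picked up when combining with the exploration side'' is unnecessary---your exploration term is exact and the concentration term is one-sided), so the lemma's inequality holds with room to spare; the paper's route makes the link to high-probability confidence sets more visible. Either way, the hierarchy enters only through the marginals $\sigma_{t,a}$, exactly as you note.
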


The second term in \cref{lem:bayes regret} is constant in $n$ for $\delta = 1 / n$. Therefore, we focus on the first $\tilde{O}(\sqrt{n})$ term. Also note that $\sigma_{t, A_t}^2$ in $\cG(n)$ is not the conditional posterior variance in \eqref{eq:mab_posterior_root}. We show how to decompose it into those variances at the updated nodes in round $t$ next.

To relate the marginal posterior variance, which is proportional to the instantaneous regret, to conditional posterior variances, which represent our model uncertainty, we adopt the following update-centric notation. We denote the list of nodes from the root to the action node $A_t$ in round $t$ by $\psi_t$. The length of $\psi_t$ is $L_t$. As an example of the notation, $\psi_t(1) = 1$ is the root, $\psi_t(L_t) = A_t$ is the action node, and $\psi_t(L_t - 1) = \parents(A_t)$ is its parent. \cref{fig:hierarchy} visualizes $\psi_t$. Now we are ready to relate the two quantities.

\begin{lemma}
\label{lem:regret decomposition} In any round $t$, the marginal posterior variance in action node $A_t$ decomposes as
\begin{align*}
  \sigma_{t, A_t}^2
  = \sum_{i = 1}^{L_t} \bigg(\prod_{j = i + 1}^{L_t}
  \frac{\hat{\sigma}_{t, \psi_t(j)}^4}{\sigma_{0, \psi_t(j)}^4}\bigg)
  \hat{\sigma}_{t, \psi_t(i)}^2\,.
\end{align*}
\end{lemma}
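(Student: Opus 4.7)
The plan is to derive a one-step recursion that relates the marginal posterior variance $\sigma_{t, i}^2 = \condvar{\theta_{*, i}}{H_t}$ to that of its parent, and then unroll it along the path $\psi_t$ from the root to the action node $A_t$. The main tool is the law of total variance applied with inner conditioning variable $\theta_{*, \parents(i)}$.

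First I would read off from \eqref{eq:mab_posterior_root} that, as a function of the parent value $\theta_p$ alone,
\[
  \hat{\theta}_{t, i}(\theta_p) = \frac{\hat{\sigma}_{t, i}^2}{\sigma_{0, i}^2}\,\theta_p + C_{t, i},
\]
where $C_{t, i} = \hat{\sigma}_{t, i}^2 \sum_{j \in \children(i)} \tilde{\sigma}_{t, j}^{-2} \tilde{\theta}_{t, j}$ is $H_t$-measurable and independent of $\theta_p$, and $\hat{\sigma}_{t, i}^2$ likewise does not depend on $\theta_p$. Because the hierarchy is a Gaussian graphical model, it is Markov with respect to $\cT$, so $\theta_{*, i}$ is conditionally independent of observations outside the subtree rooted at $i$ given $\theta_{*, \parents(i)}$. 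Consequently the distribution of $\theta_{*, i}$ given $(\theta_{*, \parents(i)}, H_t)$ coincides with the conditional posterior $P_{t, i}(\cdot \mid \theta_p)$ from \eqref{eq:posterior}. This Markov reduction is the step I would flag as the most delicate, though it is already implicit in \eqref{eq:posterior}.

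Given this, applying the law of total variance to $\theta_{*, i}$ conditional on $\theta_{*, \parents(i)}$ and $H_t$ gives
\[
  \sigma_{t, i}^2 = \hat{\sigma}_{t, i}^2 + \frac{\hat{\sigma}_{t, i}^4}{\sigma_{0, i}^4}\,\sigma_{t, \parents(i)}^2,
\]
with base case $\sigma_{t, 1}^2 = \hat{\sigma}_{t, 1}^2$, since the root has a deterministic parent value $\mu_1$ so the variance of the conditional mean vanishes. Specializing to the path $\psi_t$, where $\parents(\psi_t(j)) = \psi_t(j - 1)$, and iterating by induction on $i = 1, \dots, L_t$ telescopes into the claimed sum-product expression for $\sigma_{t, A_t}^2 = \sigma_{t, \psi_t(L_t)}^2$. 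Beyond the identification of the slope $\hat{\sigma}_{t, i}^2 / \sigma_{0, i}^2$ and the Markov step, the rest is a mechanical unrolling of the two-term recursion.
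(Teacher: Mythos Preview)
Your proposal is correct and follows essentially the same argument as the paper: both apply the law of total variance with inner conditioning on the parent, use the affine form of $\hat{\theta}_{t,i}$ in $\theta_p$ with slope $\hat{\sigma}_{t,i}^2/\sigma_{0,i}^2$ (and that $\hat{\sigma}_{t,i}^2$ is $H_t$-measurable and parent-independent), and then unroll the resulting recursion from the leaf back to the root. Your explicit invocation of the Markov property is a welcome clarification of a step the paper leaves implicit, but otherwise the proofs coincide.
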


The last piece is a lower bound, which shows that each term in \cref{lem:regret decomposition} can be bounded by the posterior update of the corresponding node $i$, representing our information gain.

\begin{lemma}
\label{lem:posterior lower bound} Fix any round $t$ and $i \in [L_t]$. Then
\begin{align*}
  \hat{\sigma}_{t + 1, \psi_t(i)}^{-2} - \hat{\sigma}_{t, \psi_t(i)}^{-2}
  \geq c^{i - L_t} \bigg(\prod_{j = i + 1}^{L_t}
  \frac{\hat{\sigma}_{t, \psi_t(j)}^4}{\sigma_{0, \psi_t(j)}^4}\bigg)
  \sigma^{-2}\,,
\end{align*}
where $c > 0$ is a universal constant such that
\begin{align}
  c \hat{\sigma}_{t, i}^{-2}
  \geq \hat{\sigma}_{t + 1, i}^{-2}
  \label{eq:posterior scaling}
\end{align}
holds for any node $i$ and round $t$.
\end{lemma}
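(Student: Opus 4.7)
The plan is a downward induction on $i$ along the path $\psi_t$, from the action node at $i = L_t$ up to the root, producing a one-step recursion that telescopes into the stated product. For the base case $i = L_t$, reading off \eqref{eq:mab_likelihood_leaf} via Bayes rule at the leaf gives $\hat\sigma_{t, A_t}^{-2} = \sigma_{0,A_t}^{-2} + |\cS_{t,A_t}|/\sigma^2$, so the single observation in round $t$ contributes $\hat\sigma_{t+1,A_t}^{-2} - \hat\sigma_{t,A_t}^{-2} = \sigma^{-2}$, matching the claimed bound with empty product and $c^0 = 1$.

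For the inductive step, the central algebraic tool is a closed form for $\tilde\sigma_{t,j}^{-2}$. Combining $\tilde\sigma_{t,j}^2 = \sigma_{0,j}^2 + M^{-1}$ from \eqref{eq:mab_likelihood} with $M = \hat\sigma_{t,j}^{-2} - \sigma_{0,j}^{-2}$ (read off of \eqref{eq:mab_posterior_root}; the action-node case $M = |\cS_{t,j}|/\sigma^2$ satisfies the same identity) yields $\tilde\sigma_{t,j}^{-2} = \sigma_{0,j}^{-2} - \hat\sigma_{t,j}^2/\sigma_{0,j}^4$. Differencing between rounds $t$ and $t+1$ and using $\hat\sigma_{t,j}^2 - \hat\sigma_{t+1,j}^2 = \hat\sigma_{t,j}^2\,\hat\sigma_{t+1,j}^2\,(\hat\sigma_{t+1,j}^{-2} - \hat\sigma_{t,j}^{-2})$ gives
\[
\tilde\sigma_{t+1,j}^{-2} - \tilde\sigma_{t,j}^{-2}
= \frac{\hat\sigma_{t,j}^2\,\hat\sigma_{t+1,j}^2}{\sigma_{0,j}^4}\bigl(\hat\sigma_{t+1,j}^{-2} - \hat\sigma_{t,j}^{-2}\bigr).
\]
Since only the on-path child $\psi_t(i+1)$ of $\psi_t(i)$ has its descendant statistics updated in round $t$, \eqref{eq:mab_posterior_root} implies $\hat\sigma_{t+1,\psi_t(i)}^{-2} - \hat\sigma_{t,\psi_t(i)}^{-2} = \tilde\sigma_{t+1,\psi_t(i+1)}^{-2} - \tilde\sigma_{t,\psi_t(i+1)}^{-2}$. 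Invoking \eqref{eq:posterior scaling} at $\psi_t(i+1)$ in the form $\hat\sigma_{t+1,\psi_t(i+1)}^2 \geq \hat\sigma_{t,\psi_t(i+1)}^2/c$ then produces the one-step recursion
\[
\hat\sigma_{t+1,\psi_t(i)}^{-2} - \hat\sigma_{t,\psi_t(i)}^{-2}
\geq \frac{\hat\sigma_{t,\psi_t(i+1)}^4}{c\,\sigma_{0,\psi_t(i+1)}^4}\bigl(\hat\sigma_{t+1,\psi_t(i+1)}^{-2} - \hat\sigma_{t,\psi_t(i+1)}^{-2}\bigr).
\]
Unrolling from $i$ down to the base case at $L_t$, the $L_t - i$ applications of \eqref{eq:posterior scaling} combine into the factor $c^{i-L_t}$, yielding exactly the stated inequality.

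The main obstacle I anticipate is spotting the clean closed form $\tilde\sigma_{t,j}^{-2} = \sigma_{0,j}^{-2} - \hat\sigma_{t,j}^2/\sigma_{0,j}^4$ and checking that it holds uniformly for both internal and action children of $\psi_t(i)$; once this identity is in hand, the rest is routine telescoping. A secondary subtlety is that \eqref{eq:posterior scaling} must be applied at $\psi_t(i+1)$ rather than at $\psi_t(i)$, because the recursion propagates the precision increase upward from the child where the observation actually lands, and the factor $\hat\sigma_{t+1,j}^2/\hat\sigma_{t,j}^2$ appearing in the cross term is precisely what \eqref{eq:posterior scaling} controls.
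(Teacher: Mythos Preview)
Your proof is correct and follows essentially the same approach as the paper: both recurse the precision increase along the path via $\hat\sigma_{t+1,\psi_t(i)}^{-2} - \hat\sigma_{t,\psi_t(i)}^{-2} = \tilde\sigma_{t+1,\psi_t(i+1)}^{-2} - \tilde\sigma_{t,\psi_t(i+1)}^{-2}$, arrive at the exact relation $\tilde\sigma_{t+1,j}^{-2} - \tilde\sigma_{t,j}^{-2} = \frac{\hat\sigma_{t,j}^2\,\hat\sigma_{t+1,j}^2}{\sigma_{0,j}^4}\,\varepsilon$, and invoke \eqref{eq:posterior scaling} to replace $\hat\sigma_{t+1,j}^2$ by $\hat\sigma_{t,j}^2/c$. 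Your closed form $\tilde\sigma_{t,j}^{-2} = \sigma_{0,j}^{-2} - \hat\sigma_{t,j}^2/\sigma_{0,j}^4$ is a neat way to unify the leaf and internal-node computations that the paper carries out separately in \eqref{eq:node lower bound} and \eqref{eq:leaf lower bound}, but the underlying algebra and the role of \eqref{eq:posterior scaling} are identical.
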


The condition in \eqref{eq:posterior scaling} means that the pseudo-counts before and after the posterior update do not change much, for any node $i$ and round $t$. The tightness of the bound is reflected by $c$ in \eqref{eq:posterior scaling}, which we bound next.

\begin{lemma}
\label{lem:posterior scaling} Let $\sigma_{0, \max} = \max_{i \in \cV} \sigma_{0, i}$. Then \eqref{eq:posterior scaling} holds for $c = 1 + \sigma_{0, \max}^2 / \sigma^2$. Moreover, $c = 2$ when $\sigma \geq \sigma_{0, \max}$.
\end{lemma}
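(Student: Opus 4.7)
The plan is to first establish the absolute one-step bound $\hat{\sigma}_{t+1, i}^{-2} - \hat{\sigma}_{t, i}^{-2} \leq \sigma^{-2}$ for every node $i$ and round $t$, and then to convert it into the multiplicative form in \eqref{eq:posterior scaling} using the trivial lower bound $\hat{\sigma}_{t, i}^{-2} \geq \sigma_{0, i}^{-2} \geq \sigma_{0, \max}^{-2}$. Once the absolute bound is in hand,
\begin{equation*}
  \hat{\sigma}_{t+1, i}^{-2}
  \leq \hat{\sigma}_{t, i}^{-2} + \sigma^{-2}
  \leq \bigl(1 + \sigma_{0, \max}^2 / \sigma^2\bigr) \hat{\sigma}_{t, i}^{-2}\,,
\end{equation*}
and the second claim is immediate, since $\sigma \geq \sigma_{0, \max}$ yields $\sigma_{0, \max}^2 / \sigma^2 \leq 1$. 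Nodes $i$ not on the update path $\psi_t$ from root to $A_t$ satisfy $\hat{\sigma}_{t+1, i}^{-2} = \hat{\sigma}_{t, i}^{-2}$, so attention can be restricted to $i \in \psi_t$.

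For such $i$, \eqref{eq:mab_posterior_root} shows that the only summand of $\hat{\sigma}_{t, i}^{-2}$ that moves between rounds is the one corresponding to the unique child $j^* \in \children(i) \cap \psi_t$, so $\hat{\sigma}_{t+1, i}^{-2} - \hat{\sigma}_{t, i}^{-2} = \tilde{\sigma}_{t+1, j^*}^{-2} - \tilde{\sigma}_{t, j^*}^{-2}$, and it suffices to bound this child-likelihood increment. We push the bound down $\psi_t$ using the following observation: \eqref{eq:mab_likelihood} rewrites as $\tilde{\sigma}_{t, j}^{-2} = M_t / (1 + \sigma_{0, j}^2 M_t)$ with $M_t = \sum_{k \in \children(j)} \tilde{\sigma}_{t, k}^{-2}$, and the scalar map $M \mapsto M / (1 + \sigma_{0, j}^2 M)$ is increasing, concave, and $1$-Lipschitz (its derivative is $1/(1 + \sigma_{0, j}^2 M)^2 \in (0, 1]$). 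Hence $\tilde{\sigma}_{t+1, j}^{-2} - \tilde{\sigma}_{t, j}^{-2} \leq M_{t+1} - M_t = \tilde{\sigma}_{t+1, k^*}^{-2} - \tilde{\sigma}_{t, k^*}^{-2}$, where $k^* \in \children(j) \cap \psi_t$ is the only child whose summand changes. Iterating down the path, the increment at $j^*$ is dominated by the increment at the action node $A_t$.

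At the leaf $A_t$, \eqref{eq:mab_likelihood_leaf} reduces the task to a one-variable calculation: with $n = \abs{\cS_{t, A_t}}$, a direct expansion yields
\begin{equation*}
  \tilde{\sigma}_{t+1, A_t}^{-2} - \tilde{\sigma}_{t, A_t}^{-2}
  = \frac{\sigma^2}{\bigl((n+1)\sigma_{0, A_t}^2 + \sigma^2\bigr)\bigl(n\sigma_{0, A_t}^2 + \sigma^2\bigr)}
  \leq \sigma^{-2}\,,
\end{equation*}
with the $n = 0$ corner case (where $\tilde{\sigma}_{t, A_t}^{-2} = 0$ by convention) giving $1 / (\sigma_{0, A_t}^2 + \sigma^2) \leq \sigma^{-2}$ as well. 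Combining this with the chaining above closes the argument. The main subtlety is the chaining step: bounding each internal-node increment independently would pick up a factor depending on the tree height $h$ or on products of ratios $\sigma_{0, j}^2 / \sigma^2$. Recognizing that $M \mapsto M / (1 + \sigma_{0, j}^2 M)$ is exactly $1$-Lipschitz is what keeps the leaf-level $\sigma^{-2}$ bound intact at every ancestor, so $c$ is free of $h$.
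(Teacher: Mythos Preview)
Your proof is correct and follows essentially the same route as the paper's. Both arguments reduce to showing that the child-likelihood increment $\tilde{\sigma}_{t+1,j}^{-2} - \tilde{\sigma}_{t,j}^{-2}$ is dominated by the increment at its updated child and is at most $\sigma^{-2}$ at the leaf; the paper writes the exact increment $\varepsilon/\bigl((1+\sigma_{0,k}^2 a)(1+\sigma_{0,k}^2(a+\varepsilon))\bigr)\leq\varepsilon$ and combines it with $\hat{\sigma}_{t+1,i}^{-2}/\hat{\sigma}_{t,i}^{-2}\leq 1+\sigma_{0,i}^2\varepsilon$, whereas you phrase the same inequality as $1$-Lipschitzness of $M\mapsto M/(1+\sigma_{0,j}^2 M)$ and then convert the absolute bound via $\hat{\sigma}_{t,i}^{-2}\geq\sigma_{0,\max}^{-2}$---a cosmetic difference only.
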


\cref{lem:posterior scaling} shows that $c$ is controlled under reasonable assumptions, that the observation noise is higher than prior widths in $\cT$. If this is not the case, this property could be attained by initial forced exploration of all actions in $\cA$.

\subsection{Regret Bound}
\label{sec:regret bound}

Now we are ready to present our main result. Recall that $h$ is the height of $\cT$, $h_i$ is the height of node $i$, and that the action nodes have height $0$ (\cref{sec:setting}).

\begin{theorem}
\label{thm:regret bound} For any $\delta > 0$, the Bayes regret of \hierts is bounded as
\begin{align*}
  \Bregret(n)
  \leq \sqrt{2 n \cG(n) \log(1 / \delta)} +
  \sqrt{2 / \pi} \sigma_{\max} K n \delta\,,
\end{align*}
where $\cG(n) = \sum_{i \in \cV} c^{h_i} w_i$ and $c$ is a scalar defined in \eqref{eq:posterior scaling}. For an action node $i$, $h_i = 0$ and
\begin{align*}
  w_i
  = \frac{\sigma_{0, i}^2}{\log\left(1 + \frac{\sigma_{0, i}^2}{\sigma^2}\right)}
  \log\bigg(1 + \frac{\sigma_{0, i}^2 n}{\sigma^2}\bigg)\,.
\end{align*}
For a non-action node $i$, $h_i > 0$ and
\begin{align*}
  w_i
  = \frac{\sigma_{0, i}^2}{\log\left(1 + \frac{\sigma_{0, i}^2}{\sigma^2}\right)}
  \log\bigg(1 + \sigma_{0, i}^2 \sum_{j \in \children(i)} \sigma_{0, j}^{-2}\bigg)\,.
\end{align*}
\end{theorem}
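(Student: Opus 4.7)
The plan is to invoke \cref{lem:bayes regret} to obtain the two-term regret decomposition directly, so that it only remains to bound $\sum_{t=1}^n \sigma_{t, A_t}^2 \le \sum_{i \in \cV} c^{h_i} w_i$ deterministically on every sample path; taking expectations then produces $\cG(n)$ and completes the theorem.

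For each round $t$, I expand $\sigma_{t, A_t}^2$ via \cref{lem:regret decomposition} and use \cref{lem:posterior lower bound} to replace each summand's product $\prod_{j=i+1}^{L_t} \hat{\sigma}_{t, \psi_t(j)}^4 / \sigma_{0, \psi_t(j)}^4$ by its upper bound $c^{L_t - i}\,\sigma^2\,(\hat{\sigma}_{t+1, \psi_t(i)}^{-2} - \hat{\sigma}_{t, \psi_t(i)}^{-2})$. Since the $i$th node along the path has height $h_{\psi_t(i)} = L_t - i$, this yields $\sigma_{t, A_t}^2 \le \sigma^2 \sum_{v \in \psi_t} c^{h_v}\,\hat{\sigma}_{t, v}^2\,(\hat{\sigma}_{t+1, v}^{-2} - \hat{\sigma}_{t, v}^{-2})$. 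Summing over $t$ and swapping the order of summation---valid because \cref{alg:update} only touches likelihoods along $\psi_t$, so $\hat{\sigma}_{t+1, v}^{-2} = \hat{\sigma}_{t, v}^{-2}$ whenever $v \notin \psi_t$---reduces the task to bounding $\sum_{t=1}^n \hat{\sigma}_{t, v}^2\,(\hat{\sigma}_{t+1, v}^{-2} - \hat{\sigma}_{t, v}^{-2})$ node by node.

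The core step is a telescoping-log argument. Setting $p_t = \hat{\sigma}_{t, v}^{-2}$, I would first establish the sharper single-step bound $p_{t+1}/p_t \le 1 + \sigma_{0, v}^2/\sigma^2$: at an action node this is immediate because a single observation adds $1/\sigma^2$ to the posterior precision and $p_t \ge \sigma_{0, v}^{-2}$; at an internal node it follows by induction along the path, using that $M \mapsto M/(\sigma_{0, v}^2 M + 1)$ is $1$-Lipschitz so the single-round absolute increment in $\tilde{\sigma}^{-2}$ (and hence in $\hat{\sigma}^{-2}$) propagates from the leaf without exceeding $1/\sigma^2$. Combining this with the monotonicity of $x \mapsto x/\log(1+x)$ on $[0, \infty)$ gives $p_{t+1}/p_t - 1 \le \bigl[(\sigma_{0, v}^2/\sigma^2)/\log(1 + \sigma_{0, v}^2/\sigma^2)\bigr]\log(p_{t+1}/p_t)$, and the logarithms telescope to $\log(p_{n+1}/p_1)$. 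Multiplying through by $\sigma^2$ leaves exactly the prefactor $\sigma_{0, v}^2/\log(1 + \sigma_{0, v}^2/\sigma^2)$ that appears in $w_i$.

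To conclude, I bound $\log(p_{n+1}/p_1)$ according to the node type. For an action node $a$, $p_{n+1} \le \sigma_{0, a}^{-2} + n/\sigma^2$ and $p_1 = \sigma_{0, a}^{-2}$, giving $\log(1 + n\sigma_{0, a}^2/\sigma^2)$; for a non-action node $i$, the fact that $\tilde{\sigma}_{t, j}^{-2} \le \sigma_{0, j}^{-2}$ for every child $j$ bounds $p_{n+1} \le \sigma_{0, i}^{-2} + \sum_{j \in \children(i)} \sigma_{0, j}^{-2}$, giving $\log(1 + \sigma_{0, i}^2 \sum_{j \in \children(i)} \sigma_{0, j}^{-2})$. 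These two cases reproduce the $w_i$ in the statement. The principal obstacle is the node-specific single-step ratio bound at internal nodes: \cref{lem:posterior scaling} only supplies the coarser uniform constant $c = 1 + \sigma_{0, \max}^2/\sigma^2$, and the Lipschitz-contraction argument through the recursive likelihood propagation is what sharpens it to $1 + \sigma_{0, v}^2/\sigma^2$; without this step, the denominator $\log(1 + \sigma_{0, i}^2/\sigma^2)$ in the theorem would degrade to $\log(1 + \sigma_{0, \max}^2/\sigma^2)$.
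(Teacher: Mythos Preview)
Your argument is correct and arrives at the same bound, but the route is a genuine reordering of the paper's. The paper first converts each summand in \cref{lem:regret decomposition} to a logarithm \emph{before} invoking \cref{lem:posterior lower bound}: writing $s_i = \prod_{j>i} \hat{\sigma}_{t,\psi_t(j)}^4/\sigma_{0,\psi_t(j)}^4$, it bounds $s_i \hat{\sigma}_{t,\psi_t(i)}^2/\sigma^2 \le c_i \log(1 + s_i \hat{\sigma}_{t,\psi_t(i)}^2/\sigma^2)$ using only the monotonicity of $x/\log(1+x)$ together with $\hat{\sigma}_{t,\psi_t(i)}^2 \le \sigma_{0,\psi_t(i)}^2$, then extracts $c^{L_t-i}$ via $a\log(1+x)\le\log(1+ax)$, and finally applies \cref{lem:posterior lower bound} inside the logarithm to telescope into $\log(\hat{\sigma}_{t+1,\psi_t(i)}^{-2}) - \log(\hat{\sigma}_{t,\psi_t(i)}^{-2})$. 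You instead apply \cref{lem:posterior lower bound} first to the raw product, regroup node-by-node to the increments $p_{t+1}/p_t - 1$, and only then pass to logarithms. Both use the same monotonicity fact (since $(y-1)/\log y$ and $x/\log(1+x)$ are the same function under $y = 1+x$), the same telescoping, and the same terminal bounds on $\hat{\sigma}_{n+1,i}^{-2}$.

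The practical difference is where the node-specific denominator $\log(1 + \sigma_{0,i}^2/\sigma^2)$ comes from. In the paper it drops out immediately from $\hat{\sigma}_{t,i}^2 \le \sigma_{0,i}^2$; in your route it requires the node-specific single-step ratio bound $p_{t+1}/p_t \le 1 + \sigma_{0,v}^2/\sigma^2$, which you correctly supply via the $1$-Lipschitz contraction of $M \mapsto M/(\sigma_{0,v}^2 M + 1)$. That argument is in fact exactly the content of the \emph{proof} of \cref{lem:posterior scaling}, even though the lemma's statement only records the uniform constant. One small caveat: your identity $h_{\psi_t(i)} = L_t - i$ holds only when every leaf sits at the same depth; in general $h_{\psi_t(i)} \ge L_t - i$, but since $c \ge 1$ this still yields the upper bound $c^{L_t - i} \le c^{h_{\psi_t(i)}}$, which is all you need (and is how the paper handles it as well).
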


For $\delta = 1 / n$, the above regret bound is $\tilde{O}(\sqrt{n \abs{\cV}})$, where $n$ is the horizon and $\abs{\cV}$ is the number of nodes, and thus of learned parameters. The dependence on the horizon $n$ is standard. As $w_i = \tilde{O}(\sigma_{0, i}^2)$, the contribution of each node $i$ to the regret is proportional to its prior variance. Thus the regret decreases when the initial uncertainty is lower. One notable term in $\cG(n)$ is exponential scaling with height $c^{h_i}$. This is not problematic, as the number of higher nodes is exponentially smaller than the lower nodes (\cref{sec:lower regret}).

\cref{thm:regret bound} also recovers a well-known Bayes regret bound for $K$-armed bandits \citep{russo14learning}. The reason is that a $K$-armed bandit can be viewed as a tree with height $h = 1$, where the root parameter $\theta_1$ is the prior mean of the actions. Because $\theta_1$ is certain, $w_1 \to 0$ and $\sum_{i \in \cV} c^{h_i} w_i \approx \sum_{i = 2}^{K + 1} w_i = O(K)$.

\subsection{Lower Regret due to Hierarchy}
\label{sec:lower regret}

Now we give examples of how the hierarchy can help with reducing regret. To simplify the discussion, we ignore logarithmic factors in the definitions of $w_i$ in \cref{thm:regret bound}. We assume that $\cT$ is a balanced $b$-ary tree with height $h$; with $K = b^h$ action nodes and $b^{h - \ell}$ nodes at height $\ell$. Our discussion is under the assumption that $c = 2$, as derived in \cref{lem:posterior scaling}. More gains are possible when $c < 2$.

We compare the regret of \hierts to classical Thompson sampling (\ts), which ignores the hierarchy $\cT$, and maintains independent posteriors of $\theta_{*, a}$ for all actions $a \in \cA$. To have a fair comparison, we set the marginal prior variances of all actions in \ts as in \hierts. Specifically, let $\psi_a$ be the path in $\cT$ from action node $a$ to the root. Then the marginal prior of action $a$ is $\cN(\mu_1, \bar{\sigma}_{0, a}^2)$, where $\bar{\sigma}_{0, a}^2 = \sum_{i \in \psi_a} \sigma_{0, i}^2$ and $\mu_1$ is the hyper-prior mean. Note that \ts can be analyzed exactly as \hierts in \cref{thm:regret bound}. The only difference is in the complexity term $\cG_{\ts}(n) \approx \sum_{a \in \cA} \bar{\sigma}_{0, a}^2$.

\textbf{Problem 1.} We start with a problem where all prior variances are identical, $\sigma_{0, i}^2 = 1$ for any $i \in \cV$. In this case, all prior variances in \ts are $\bar{\sigma}_{0, a}^2 = h + 1$ and its complexity term is $\cG_{\ts}(n) = (h + 1) b^h$. In \hierts, we aggregate the nodes by height and get
\begin{align*}
  \cG(n)
  = \sum_{\ell = 0}^h b^{h - \ell} c^\ell
  = b^h \sum_{\ell = 0}^h (2 / b)^\ell
  \leq \frac{1}{1 - 2 / b} b^h\,.
\end{align*}
Thus \hierts improves $\cG(n)$ by $\Omega(h)$ when $b > 2$. Since $h = \log_b b^h = \log_b K$, we get $\cG_{\ts}(n) / \cG(n) \approx \log_b K$, and \hierts reduces the Bayes regret by a multiplicative factor $\sqrt{\log_b K}$. This argument can be adjusted for $b = 2$ to get a comparable regret to \ts.

\textbf{Problem 2.} Now we consider a problem where the conditional prior variances in $\cT$ double with height $\sigma_{0, i}^2 = 2^{h_i}$, where $h_i$ is the height of node $i$. This setting is motivated in \cref{sec:introduction}. We expect higher statistical gains because the uncertainty of highly-uncertain nodes at higher levels of $\cT$ is reduced jointly using all actions. In this case, all prior variances in \ts are $\bar{\sigma}_{0, a}^2 = 2^{h + 1}$ and its complexity term is $\cG_{\ts}(n) = 2^{h + 1} b^h$. In comparison, \hierts yields
\begin{align*}
  \cG(n)
  = \sum_{\ell = 0}^h b^{h - \ell} c^\ell 2^\ell
  \leq \sum_{\ell = 0}^h b^{h - \ell} 4^\ell
  \leq \frac{1}{1 - 4 / b} b^h\,,
\end{align*}
where the last step is by the same argument in Problem 1. Thus \hierts improves $\cG(n)$ by $\Omega(2^{h + 1})$ if $b > 4$. Since
\begin{align*}
  2^{h + 1}
  = 2 \cdot 2^{\log_b b^h}
  = 2 \cdot 2^{\log_2 b^h / \log_2 b}
  = 2 K^\frac{1}{\log_2 b}\,,
\end{align*}
we get $\cG_{\ts}(n) / \cG(n) \approx K^\frac{1}{\log_2 b}$, and \hierts reduces the Bayes regret by a multiplicative factor $\sqrt{K^\frac{1}{\log_2 b}}$. For $b = 5$, this factor would be close to $K^\frac{1}{4}$. Therefore, the regret is reduced by a polynomial factor in $K$.

\section{Experiments}
\label{sec:experiments}

\begin{figure*}[t!]
  \centering
  \begin{minipage}{0.31\textwidth}
    \includegraphics[width=\linewidth]{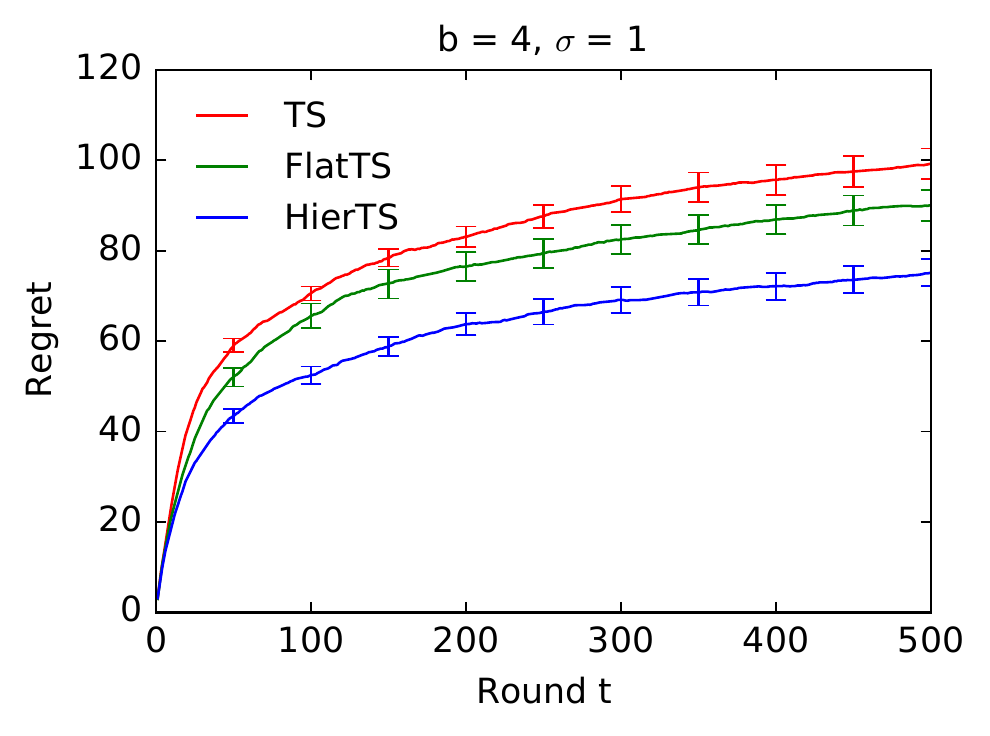}
  \end{minipage}
  \begin{minipage}{0.31\textwidth}
    \includegraphics[width=\linewidth]{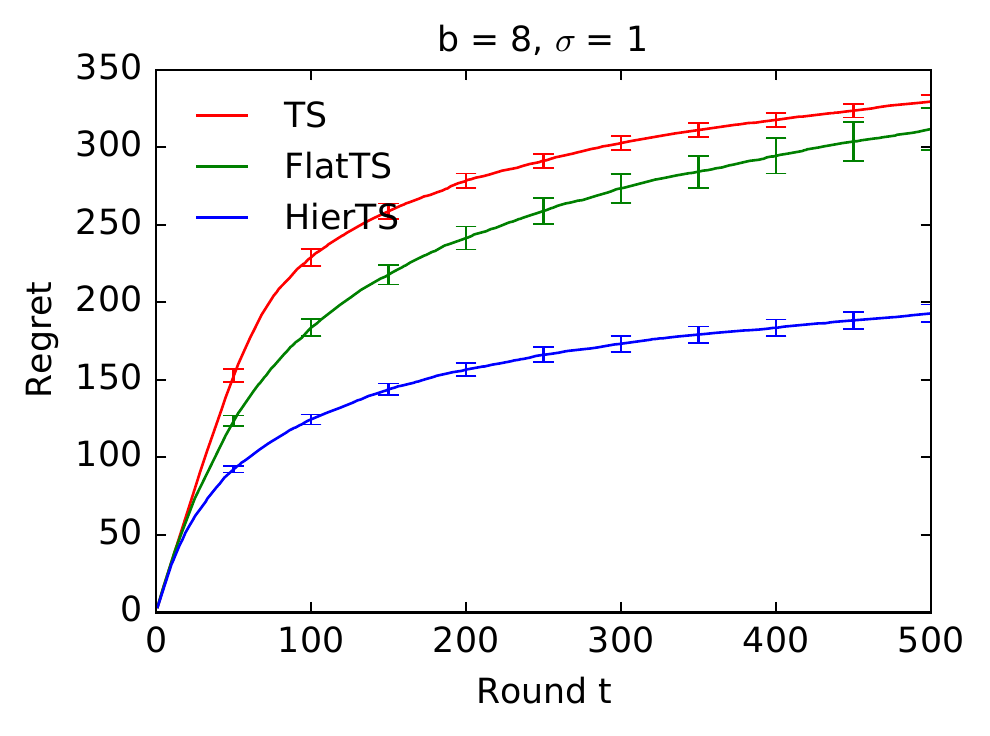}
  \end{minipage}
  \begin{minipage}{0.31\textwidth}
    \includegraphics[width=\linewidth]{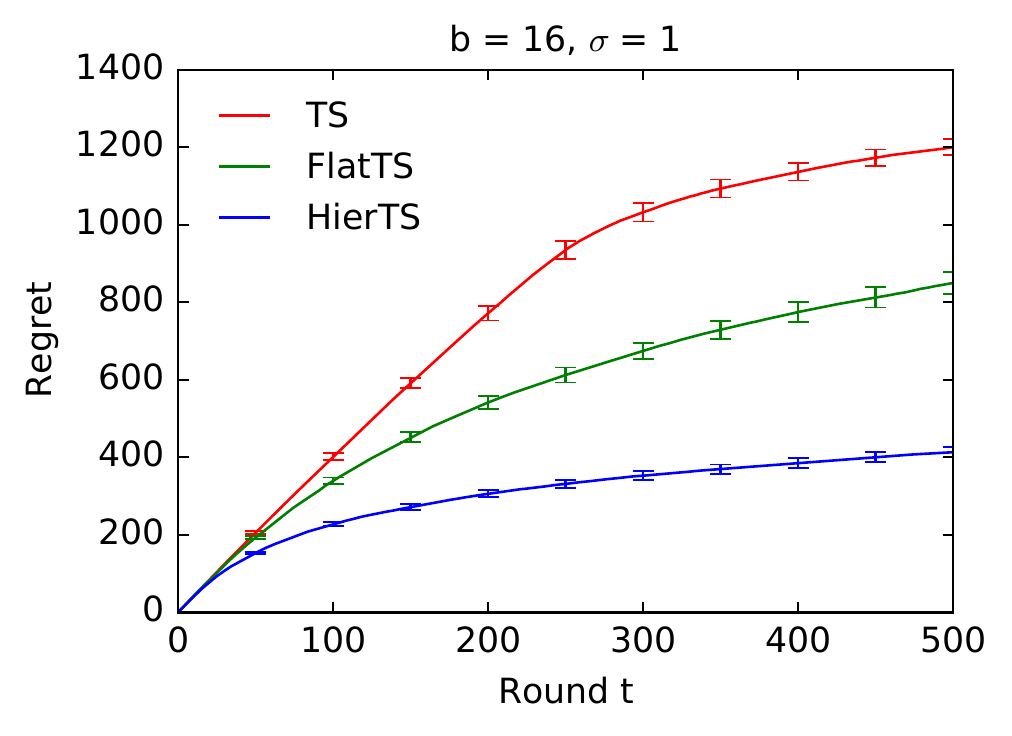}
  \end{minipage}
  \vspace{-0.1in}
  \caption{Regret of \hierts on synthetic bandit problems with varying branching factor $b$.}
  \label{fig:increasing_vary_b}
\end{figure*}

We compare \hierts to two Thompson sampling baselines that either ignore or only partially leverage the hierarchy $\cT$. The first baseline is classical TS (\ts), which treats each action independently and is introduced in \cref{sec:lower regret}. The second baseline, which we call \hiertstwo, only uses a $2$-level hierarchy $\cT$, namely the root and leaves. The hyper-prior $P_{0, 1}$ for the root is unchanged. However, for any action $a \in \cA$, the conditional prior is $P_{0, a}(\cdot \mid \theta_1) = \cN(\cdot; \theta_1, \bar{\sigma}_{0, a}^2 - \sigma_{0, 1}^2)$, where $\bar{\sigma}_{0, a}^2$ is the marginal prior variance used by \ts. This baseline mimics existing algorithms for $2$-level Gaussian hierarchies with a common root \citep{kveton21metats,basu21no,hong2022hierts}, and is similar to structured bandits where the actions share a latent parameter \citep{gupta2021unified}. In a contextual linear bandit, we implemented the baselines analogously, by replacing variances with covariances.

\subsection{Synthetic Experiments}
\label{sec:synthetic experiments}

Our first experiments are on a synthetic Gaussian bandit, where we validate theoretical findings from \cref{sec:lower regret}. We experiment with both problems in \cref{sec:lower regret}, which are $b$-ary trees with height $h$ and $K = b^h$ actions. In Problem 1, the prior variances are constant. In Problem 2, the prior variances double with height. In both problems, the mean of the hyper-prior is $\mu_1 = 0$, and the reward of action $a$ is $\theta_{*, a}$ with variance $\sigma^2 = 1$.

We start with Problem 2, where we fix the height at $h = 2$ and vary the branching factor $b$. All algorithms are run for $n = 500$ rounds and evaluated by the Bayes regret on $100$ independent samples of $\Theta_*$. We plot its mean and standard error in \cref{fig:increasing_vary_b}. For all $b$, we observe that \hierts significantly outperforms both baselines. In the next experiment, we consider both Problems 1 and 2. We fix the branching factor at $b = 2$ and vary the height $h$. All algorithms are run for $n = 500$ rounds on $100$ independent samples of $\Theta_*$. We measure the reduction in the Bayes regret of \hierts and \hiertstwo, as a ratio of the \ts regret over the regret in question. In \cref{fig:vary_h}a, we plot the ratios for Problem 1. \cref{sec:lower regret} suggests a $O(h)$ reduction in the \hierts regret, which the plot confirms. In \cref{fig:vary_h}b, we plot the ratios for Problem 2. \cref{sec:lower regret} suggests a $O(2^{h / 2})$ reduction in the \hierts regret, which is exponential in height $h$. The plot confirms this.

\subsection{Multi-Label Image Classification}
\label{sec:image classification}

\begin{figure*}[t]
  \centering
  \begin{minipage}{0.6\textwidth}
    \begin{minipage}{0.5\linewidth}
      \includegraphics[width=\linewidth]{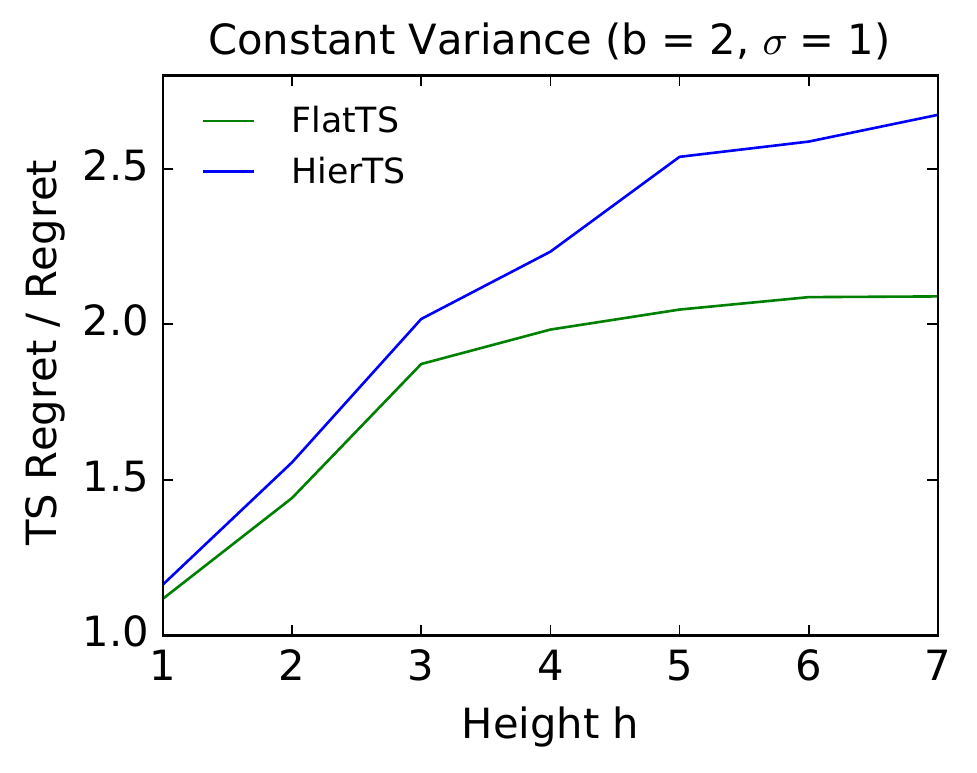}
    \end{minipage}
    \begin{minipage}{0.5\linewidth}
      \includegraphics[width=\linewidth]{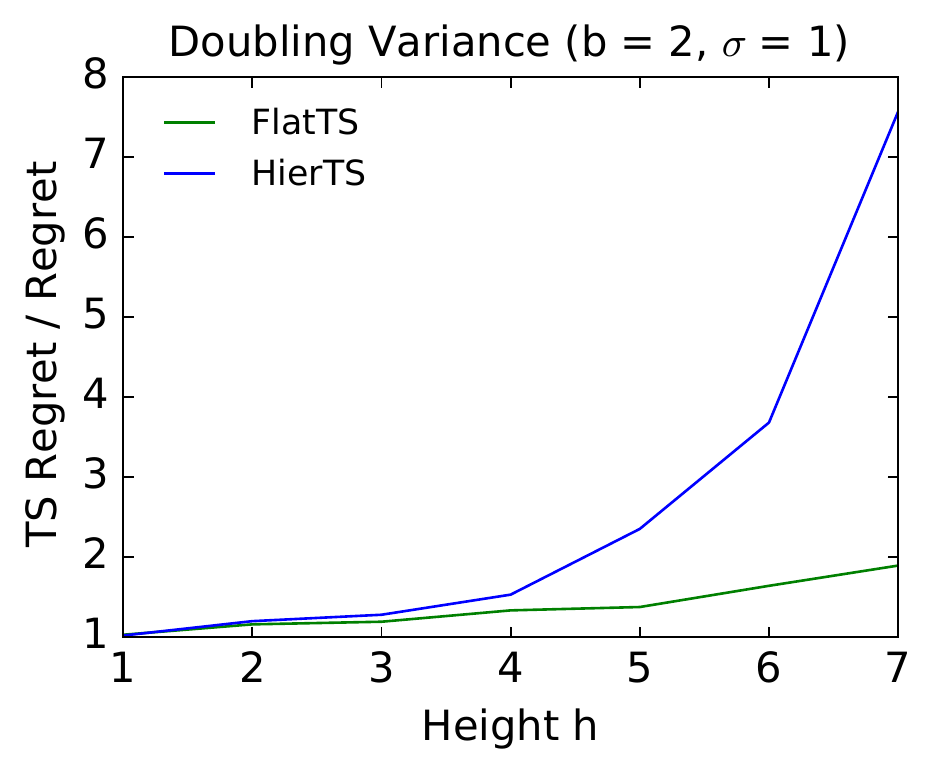}
    \end{minipage}
    \vspace{-0.1in}
    \caption{Improvement in the \hierts regret when the prior variance (a) is constant or (b) increases with height, as a function of the tree height $h$.}
    \label{fig:vary_h}
  \end{minipage}
  \hspace{0.1in}
  \begin{minipage}{0.33\textwidth}
    \includegraphics[width=\linewidth]{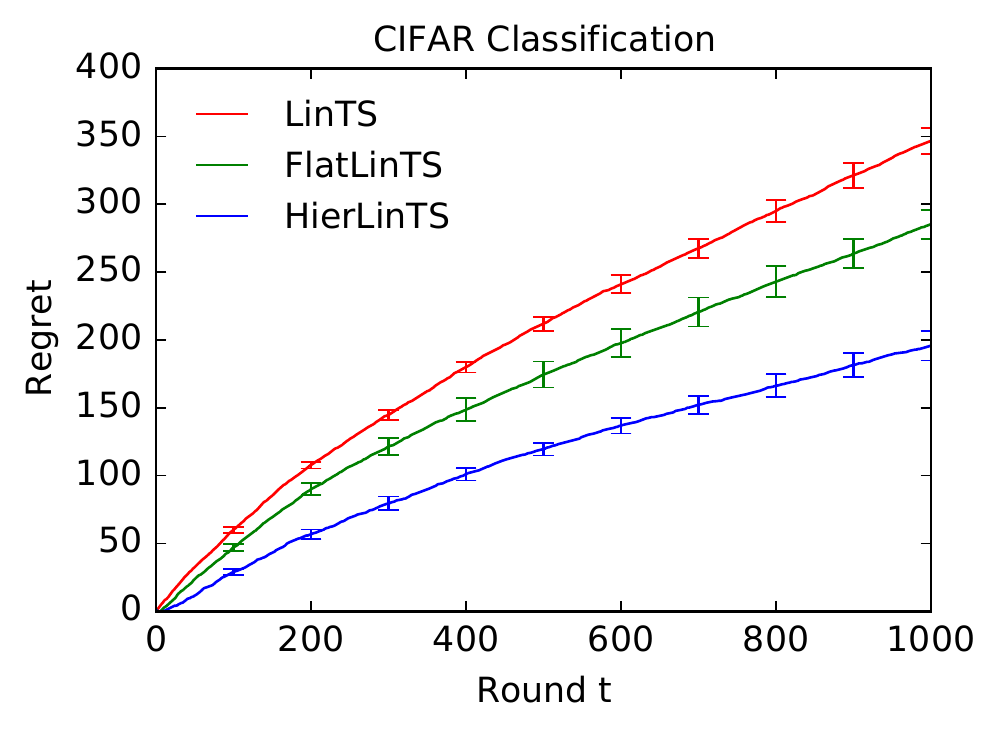}
    \vspace{-0.25in}
    \caption{Regret of \hierts on CIFAR-100 image classification.}
    \label{fig:cifar100}
  \end{minipage}
  \vspace{-0.1in}
\end{figure*}

The last experiment is on a multi-label image classification problem with linear rewards. We use the CIFAR-100 dataset \citep{cifar}, which comprises $60,000$ images of size $32 \times 32$. There are $50,000$ training and $10,000$ test images. Each image belongs to one of $100$ classes (labels) and $20$ super-classes, each consisting of $5$ classes. Each image is represented by a $d = 10$ dimensional feature vector, which we obtain by downsampling a $100$-dimensional feature vector. That one is an embedding computed by an EfficientNet-L2 network applied to the image \cite{Xie_2020_CVPR, efficientnet,foret2021sharpnessaware}. The network is a convolutional neural network pretrained on both ImageNet~\citep{ILSVRC15} and unlabeled JFT-300M~\citep{jft}, and fine-tuned on the CIFAR-100 training set.

We randomly select $5$ super-classes and their corresponding $K = 25$ classes become actions. The test and training sets are filtered to these classes. Our bandit problem is set up as follows. For each action $a$, $\theta_{*, a}$ is the mean feature vector of test images in class $a$. In round $t$, context $X_t$ is the feature vector of a random image from the test set and the reward for taking action $A_t$ is $Y_t \sim \cN(X_t\T \theta_{*, A_t}, 0.5^2)$. Therefore, on average, the reward is maximized when the true class is chosen. Finally, we construct a $3$-level hierarchy $\cT$ as follows. The hyper-prior for the root $P_{0, 1} = \cN(\mu_1, \Sigma_{0, 1})$ is a Gaussian fitted to all \emph{training images}. The nodes at height $1$ correspond to the $5$ super-classes and have conditional priors $P_{0, i}(\cdot \mid \theta_1) = \cN(\cdot; \theta_1, \Sigma_{0, i})$, where $\Sigma_{0, i}$ is fitted to the training images of super-class $i$. Finally, at height $2$, the nodes correspond to actions and have conditional priors $P_{0, a}(\cdot \mid \theta_{\parents(a)}) = \cN(\cdot; \theta_{\parents(a)}, \Sigma_{0, a})$, where $\Sigma_{0, a}$ is fitted to the training images of class $a$.

We report the mean and standard error of the regret over $10$ runs of linear variants of all algorithms in \cref{fig:cifar100}. We observe again that \hierts outperforms both baselines. Note that the true model parameters of $\cT$, namely $\mu_1$ and $\Sigma_{0, i}$, are unknown in this problem; and we estimate them from training images. Therefore, even when we relax the assumption that they are known, it is beneficial to estimate them, and use the structure of $\cT$.

\section{Related Work}
\label{sec:related work}

Thompson sampling algorithms have been widely applied to contextual bandits because of their computational efficiency and strong empirical performance \citep{chu11contextual,chapelle11empirical,abbasi-yadkori11improved}. \citet{russo14learning} derived first Bayes regret bounds for TS. Our proposed algorithm \hierts extends TS to tree hierarchies. TS with a $2$-level hierarchy over tasks was applied and analyzed in both meta-learning and multi-task learning \citep{kveton21metats,basu21no,wan21metadata,hong2022hierts}. The main difference in our work is that we move from $2$-level hierarchies to an arbitrary depth, and develop both algorithmic and theory foundations for this setting. Our analysis extends the variance decompositions proposed in \citet{hong2022hierts} to trees. Alternatively, information theory could be used to derive Bayes regret bounds \citep{russo16information,lu2019information,basu21no}, but we are unaware of any for trees.

We consider a setting where an underlying structure exists among the actions. There are prior works in structured bandits \citep{tirinzoni20novel,lattimore2014bounded,gupta2021unified} that assume a shared latent parameter among all actions. This can be viewed as a special case of our setting with a $2$-level hierarchy. In latent bandits, the parameter is a discrete variable \citep{maillard14latent,hong20latent}. Recent works also applied approximate TS to more complex structures \citep{gopalan14thompson,yu20graphical}. Such algorithms are general, but can only be analyzed in limited settings with strong assumptions. We consider a special tree structure, where we can derive and analyze an exact algorithm. \citet{majzoubi2020efficient} study a contextual bandit problem in continuous action spaces, where they utilize a hierarchical tree structure and propose a UCB algorithm. While UCB algorithms tend to be conservative, \hierts leverages the hierarchy and can be run as analyzed without tuning.

The closest related work is \citet{sen2021topk}, who also study contextual bandits with a tree hierarchy over actions. Both of our works address the structured action space using a hierarchy of regressors, motivated by multi-label classification \citep{prabhu2018parabel,yu2020pecos}. However, our work differs in several key aspects. First, we consider a distributional perspective over the action hierarchy, where the internal nodes are associated with a prior distribution rather than a fixed center and radius as in \citet{sen2021topk}. In fact, \citet{sen2021topk} do not model the statistical uncertainty at all. Second, we propose a TS algorithm using novel recursive derivations of the posterior. \citet{sen2021topk} consider a greedy strategy and beam search to avoid evaluation of all actions. In \cref{sec:algorithm}, we discuss how similar improvements can be incorporated in \hierts. Finally, our Bayesian analysis reveals structural properties that imply low regret. The low regret in \citet{sen2021topk} is attained by making an assumption on the regression oracle, and can grow linearly when the oracle is imperfect.

\section{Conclusions}
\label{sec:conclusions}

In many practical problems, the action space is large and a good generalization over the actions is not obvious. Motivated by this, we study a contextual bandit problem with a \emph{deep hierarchy} over the actions. We propose hierarchical Thompson sampling (\hierts) for regret minimization in this model, which can be implemented exactly and efficiently with Gaussian observations. We prove a Bayes regret bound for \hierts that quantifies its increased statistical efficiency over vanilla TS. We validate this finding empirically, and also apply \hierts to a challenging classification problem with label hierarchy.

Our work is a major step towards studying bandit problems with rich graphical model structures. Many of its limitations can be addressed by future work. For instance, a frequentist analysis is possible and would only differ in \cref{lem:bayes regret}. The rest of the proof, which captures the structure of our problem, is a worst-case argument. In addition, it is easy to extend our analysis to contextual bandits, based on the similarities of the multi-armed (\cref{sec:mab posteriors}) and contextual (\cref{sec:linear posteriors}) bandit posteriors; and that the proof of \cref{thm:regret bound} relies on an elliptical-like lemma. Finally, we believe that our method can be extended beyond Gaussian trees. As discussed in \cref{sec:efficient posterior computation}, exact posterior sampling is challenging under the constraint of computational efficiency; but many tractable approximations exist. For exact posterior sampling, we believe that our proofs can be extended to general exponential-family distributions. Another direction of future work is an extension to directed acyclic graphs (DAGs). The nodes in DAGs can be ordered, and therefore similar recursions to \cref{sec:posteriors,sec:analysis} can be established.

\bibliographystyle{icml2022}
\bibliography{References}

\clearpage
\onecolumn
\appendix

\section{Posterior Derivations}
\label{sec:posterior derivations}

This section contains our posterior derivations.

\subsection{Multi-Armed Bandit Posterior}
\label{sec:mab posterior derivation}

The proof is by induction. We start with the inductive step.

\begin{lemma}
\label{lem:mab recursion} Fix a non-action node $j$. Let $\parents(j) = i$ and $C = \children(j)$. Let
\begin{align}
  \condprob{H_{t, j}, \theta_{*, j} = \theta}{\theta_{*, i} = \theta_i}
  \propto \exp\left[- \frac{1}{2}
  \left(\sigma_0^{-2} (\theta - \theta_i)^2 +
  \sum_{k \in C} \sigma_k^{-2} (\theta - \theta_k)^2\right)\right]\,,
  \label{eq:mab induction}
\end{align}
where $\theta_k, \sigma_k$ are the parameters of $k \in C$. Then
\begin{align*}
  \condprob{H_{t, j}}{\theta_{*, i} = \theta_i}
  \propto \exp\left[- \frac{1}{2}
  \tilde{\sigma}_j^{-2} (\theta_i - \tilde{\theta}_j)^2\right]
\end{align*}
for some $\tilde{\theta}_j$ and $\tilde{\sigma}_j$.
\end{lemma}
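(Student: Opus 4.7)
The plan is to carry out the standard Gaussian marginalization: complete the square in $\theta$ inside the bracket of \eqref{eq:mab induction}, perform the resulting Gaussian integral over $\theta$, and then read off that the leftover exponent, viewed as a function of $\theta_i$, is quadratic with the claimed mean and precision.

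First I would introduce shorthand $M = \sum_{k \in C}\sigma_k^{-2}$, $A = \sigma_0^{-2} + M$, and $B(\theta_i) = \sigma_0^{-2}\theta_i + \sum_{k\in C}\sigma_k^{-2}\theta_k$. Expanding the squares in \eqref{eq:mab induction} rewrites the bracket as $A\theta^2 - 2B(\theta_i)\theta + R$, where $R = \sigma_0^{-2}\theta_i^2 + \sum_{k\in C}\sigma_k^{-2}\theta_k^2$ is independent of $\theta$. Completing the square in $\theta$ and integrating it out contributes a factor of $\sqrt{2\pi/A}$, which does not depend on $\theta_i$ and can be absorbed into the proportionality constant, leaving an exponent of $-\tfrac{1}{2}\bigl(R - B(\theta_i)^2/A\bigr)$.

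The remaining task is to verify that $R - B(\theta_i)^2/A$, as a function of $\theta_i$, is a single quadratic $\tilde\sigma_j^{-2}(\theta_i - \tilde\theta_j)^2$ up to a $\theta_i$-independent constant. Matching the coefficient of $\theta_i^2$ gives $\sigma_0^{-2} - \sigma_0^{-4}/A = \sigma_0^{-2}M/A$, whose reciprocal is $\sigma_0^2 + M^{-1}$; this is exactly $\tilde\sigma_j^2$ from \eqref{eq:mab_likelihood}. Matching the linear coefficient in $\theta_i$ and dividing by the precision yields $\tilde\theta_j = M^{-1}\sum_{k\in C}\sigma_k^{-2}\theta_k$, which also matches \eqref{eq:mab_likelihood}. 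All $\theta_i$-independent remainders are absorbed into $\propto$.

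Conceptually there is no obstacle, since Gaussians are closed under marginalization; the main risk is algebraic. The step requiring care is checking that $-B(\theta_i)^2/A$ produces no mixed $\theta_i\theta_k$ contributions beyond those that can be folded into $\tilde\theta_j$, which hinges on the fact that $\theta_i$ enters $B$ only through the single linear coefficient $\sigma_0^{-2}$. Once that is in hand, the identities $\sigma_0^{-2}/A = 1 - M/A$ and $(\sigma_0^{-2}M/A)^{-1} = \sigma_0^2 + M^{-1}$ close the bookkeeping and produce the closed forms asserted by the lemma.
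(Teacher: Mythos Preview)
Your proposal is correct and follows essentially the same route as the paper: complete the square in $\theta$, integrate it out, then complete the square in $\theta_i$. Your $A$, $B(\theta_i)$, and $M$ are exactly the paper's $s$, $v$, and $s-\sigma_0^{-2}$, and you arrive at the same closed forms $\tilde{\sigma}_j^2 = \sigma_0^2 + M^{-1}$ and $\tilde{\theta}_j = M^{-1}\sum_{k\in C}\sigma_k^{-2}\theta_k$.
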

\begin{proof}
Let $s = \sigma_0^{-2} + \sum_{k \in C} \sigma_k^{-2}$ and $v = \sigma_0^{-2} \theta_i + \sum_{k \in C} \sigma_k^{-2} \theta_k$. We start with completing the square of $\theta$,
\begin{align*}
  \log \condprob{H_{t, j}, \theta_{*, j} = \theta}{\theta_{*, i} = \theta_i}
  & \propto \sigma_0^{-2} (\theta - \theta_i)^2 +
  \sum_{k \in C} \sigma_k^{-2} (\theta - \theta_k)^2 \\
  & \propto s \theta^2 -
  2 \theta \left(\sigma_0^{-2} \theta_i + \sum_{k \in C} \sigma_k^{-2} \theta_k\right) +
  \sigma_0^{-2} \theta_i^2 \\
  & = s (\theta^2 - 2 \theta s^{-1} v + s^{-2} v^2) +
  \sigma_0^{-2} \theta_i^2 - s^{-1} v^2 \\
  & = s (\theta - s^{-1} v)^2 + \sigma_0^{-2} \theta_i^2 - s^{-1} v^2\,.
\end{align*}
In the second step, we omit constants in $\theta$ and $\theta_i$. Since we got a quadratic form in $\theta$, we know that
\begin{align*}
  \int_\theta \condprob{H_{t, j}, \theta_{*, j} = \theta}
  {\theta_{*, i} = \theta_i} \dif \theta
  \propto \exp\left[- \frac{1}{2}
  (\sigma_0^{-2} \theta_i^2 - s^{-1} v^2)\right]\,.
\end{align*}
Let $\hat{s} = \sigma_0^{-2} - \sigma_0^{-4} s^{-1}$. Now we complete the square of $\theta_i$,
\begin{align*}
  \sigma_0^{-2} \theta_i^2 - s^{-1} v^2
  & = \sigma_0^{-2} \theta_i^2 - s^{-1}
  \left(\sigma_0^{-2} \theta_i + \sum_{k \in C} \sigma_k^{-2} \theta_k\right)^2
  = \sigma_0^{-2} \theta_i^2 - \sigma_0^{-4} s^{-1}
  \left(\theta_i + \sigma_0^2 \sum_{k \in C} \sigma_k^{-2} \theta_k\right)^2 \\
  & \propto \hat{s} \left(\theta_i^2 -
  2 \theta_i \hat{s}^{-1} \sigma_0^{-2} s^{-1} \sum_{k \in C} \sigma_k^{-2} \theta_k\right)
  \propto \hat{s} \left(\theta_i -
  \hat{s}^{-1} \sigma_0^{-2} s^{-1} \sum_{k \in C} \sigma_k^{-2} \theta_k\right)^2\,.
\end{align*}
In the last two steps, we omit constants in $\theta_i$. Finally, note that
\begin{align*}
  \hat{s}
  & = \frac{\sigma_0^{-2} (s - \sigma_0^{-2})}{s}
  = (\sigma_0^2 + (s - \sigma_0^{-2})^{-1})^{-1}\,, \\
  \hat{s}^{-1} \sigma_0^{-2} s^{-1}
  & = \frac{s}{\sigma_0^{-2} (s - \sigma_0^{-2})} \sigma_0^{-2} s^{-1}
  = (s - \sigma_0^{-2})^{-1}\,.
\end{align*}
This completes the proof, for $\tilde{\theta}_j = (s - \sigma_0^{-2})^{-1} \sum_{k \in C} \sigma_k^{-2} \theta_k$ and $\tilde{\sigma}_j^2 = \sigma_0^2 + (s - \sigma_0^{-2})^{-1}$.
\end{proof}

For an action node $j$, \eqref{eq:mab induction} holds for $\sigma_k = \sigma$ and $\theta_k = Y_k$, where $Y_k$ is an observation $k$ of node $j$ and $\sigma$ is observation noise. This is the basis of the induction.

\subsection{Linear Bandit Posterior}
\label{sec:linear posterior derivation}

The proof is by induction. We start with the inductive step.

\begin{lemma}
\label{lem:linear recursion} Fix a non-action node $j$. Let $\parents(j) = i$ and $C = \children(j)$. Let
\begin{align}
  \condprob{H_{t, j}, \theta_{*, j} = \theta}{\theta_{*, i} = \theta_i}
  \propto \exp\left[- \frac{1}{2}
  \left((\theta - \theta_i)\T \Lambda_0 (\theta - \theta_i) +
  \sum_{k \in C} (\theta - \theta_k)\T \Lambda_k (\theta - \theta_k)\right)\right]\,,
  \label{eq:linear induction}
\end{align}
where $\theta_k, \Lambda_k$ are the parameters of $k \in C$. Then
\begin{align*}
  \condprob{H_{t, j}}{\theta_{*, i} = \theta_i}
  \propto \exp\left[- \frac{1}{2}
  (\theta_i - \tilde{\theta}_j)\T \tilde{\Lambda}_j (\theta_i - \tilde{\theta}_j)\right]
\end{align*}
for some $\tilde{\theta}_j$ and $\tilde{\Lambda}_j$.
\end{lemma}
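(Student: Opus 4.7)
The plan is to mirror the scalar proof of \cref{lem:mab recursion} but with vector parameters and matrix precisions; the algebra is essentially ``complete the square twice.'' Let $S = \Lambda_0 + \sum_{k \in C} \Lambda_k$ and $v = \Lambda_0 \theta_i + \sum_{k \in C} \Lambda_k \theta_k$. Expanding the quadratic form in the exponent of \eqref{eq:linear induction} in $\theta$ and collecting terms, I would rewrite it as
\begin{align*}
(\theta - \theta_i)\T \Lambda_0 (\theta - \theta_i) + \sum_{k \in C} (\theta - \theta_k)\T \Lambda_k (\theta - \theta_k) = (\theta - S^{-1} v)\T S (\theta - S^{-1} v) + \theta_i\T \Lambda_0 \theta_i + \sum_{k \in C} \theta_k\T \Lambda_k \theta_k - v\T S^{-1} v.
\end{align*}
This is the matrix analog of the first completion-of-the-square step in the MAB proof, and it is well defined because $S$ is positive definite as a sum of positive definite matrices.

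Next I would integrate out $\theta$. Since only the first summand above depends on $\theta$, the integral produces a Gaussian normalization constant that does not depend on $\theta_i$, leaving
\begin{align*}
\condprob{H_{t, j}}{\theta_{*, i} = \theta_i} \propto \exp\!\left[-\tfrac{1}{2}\bigl(\theta_i\T \Lambda_0 \theta_i - v\T S^{-1} v\bigr)\right],
\end{align*}
where I drop terms that do not involve $\theta_i$. Substituting $v = \Lambda_0 \theta_i + u$ with $u = \sum_{k \in C} \Lambda_k \theta_k$ gives $v\T S^{-1} v = \theta_i\T \Lambda_0 S^{-1} \Lambda_0 \theta_i + 2 \theta_i\T \Lambda_0 S^{-1} u + u\T S^{-1} u$, so the quadratic form in $\theta_i$ is $\theta_i\T (\Lambda_0 - \Lambda_0 S^{-1} \Lambda_0)\theta_i - 2 \theta_i\T \Lambda_0 S^{-1} u$ plus a constant.

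Completing the square in $\theta_i$ then identifies the precision $\tilde{\Lambda}_j = \Lambda_0 - \Lambda_0 S^{-1} \Lambda_0$ and the mean $\tilde{\theta}_j = \tilde{\Lambda}_j^{-1} \Lambda_0 S^{-1} u$. The main obstacle, and the step I would be most careful with, is simplifying these two expressions into the form claimed in \cref{sec:linear posteriors}. For the precision, I would apply the Woodbury identity to $\Lambda_0 - \Lambda_0 S^{-1} \Lambda_0 = \Lambda_0 S^{-1}(S - \Lambda_0)$ to obtain $\tilde{\Lambda}_j = \bigl(\Lambda_0^{-1} + (S - \Lambda_0)^{-1}\bigr)^{-1} = \bigl(\Sigma_0 + (\sum_{k \in C} \Lambda_k)^{-1}\bigr)^{-1}$, matching the recursion for non-action nodes. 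For the mean, the same identity yields $\tilde{\Lambda}_j^{-1} \Lambda_0 S^{-1} = (S - \Lambda_0)^{-1} = (\sum_{k \in C} \Lambda_k)^{-1}$, so $\tilde{\theta}_j = (\sum_{k \in C} \Lambda_k)^{-1} \sum_{k \in C} \Lambda_k \theta_k$, again matching \cref{sec:linear posteriors}. A cleaner but equivalent route, which I would include as a sanity check, is to interpret the aggregate children factor as a virtual Gaussian observation of $\theta$ with mean $\tilde{\theta}_j$ and covariance $(\sum_{k \in C} \Lambda_k)^{-1}$; marginalizing $\theta$ out of the prior $\theta \mid \theta_i \sim \cN(\theta_i, \Sigma_0)$ then gives the claimed form in one line via standard Gaussian marginalization.
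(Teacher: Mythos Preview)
Your proposal is correct and follows essentially the same approach as the paper: define $S$ and $v$ (the paper's $V$), complete the square in $\theta$, integrate it out, complete the square in $\theta_i$, and then apply the Woodbury identity to simplify $\tilde{\Lambda}_j$ and $\tilde{\theta}_j$ into the recursive forms. Your additional sanity check via Gaussian marginalization of a virtual observation is a nice touch that the paper does not include, but it does not change the underlying argument.
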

\begin{proof}
Let $S = \Lambda_0 + \sum_{k \in C} \Lambda_k$ and $V = \Lambda_0 \theta_i + \sum_{k \in C} \Lambda_k \theta_k$. We start with completing the square of $\theta$,
\begin{align*}
  \log \condprob{H_{t, j}, \theta_{*, j} = \theta}{\theta_{*, i} = \theta_i}
  & \propto (\theta - \theta_i)\T \Lambda_0 (\theta - \theta_i) +
  \sum_{k \in C} (\theta - \theta_k)\T \Lambda_k (\theta - \theta_k) \\
  & \propto \theta\T S \theta -
  2 \theta\T \left(\Lambda_0 \theta_i + \sum_{k \in C} \Lambda_k \theta_k\right) +
  \theta_i\T \Lambda_0 \theta_i \\
  & = \theta\T S (\theta - 2 S^{-1} V) + \theta_i\T \Lambda_0 \theta_i \\
  & = (\theta - S^{-1} V)\T S (\theta - S^{-1} V) +
  \theta_i\T \Lambda_0 \theta_i - V\T S^{-1} V\,.
\end{align*}
In the second step, we omit constants in $\theta$ and $\theta_i$. Since we got a quadratic form in $\theta$, we know that
\begin{align*}
  \int_\theta \condprob{H_{t, j}, \theta_{*, j} = \theta}
  {\theta_{*, i} = \theta_i} \dif \theta
  \propto \exp\left[- \frac{1}{2}
  (\theta_i\T \Lambda_0 \theta_i - V\T S^{-1} V)\right]\,.
\end{align*}
Let $\hat{S} = \Lambda_0 - \Lambda_0 S^{-1} \Lambda_0$. Now we complete the square of $\theta_i$,
\begin{align*}
  \theta_i\T \Lambda_0 \theta_i - V\T S^{-1} V
  & = \theta_i\T \Lambda_0 \theta_i -
  \left(\Lambda_0 \theta_i + \sum_{k \in C} \Lambda_k \theta_k\right)\T S^{-1}
  \left(\Lambda_0 \theta_i + \sum_{k \in C} \Lambda_k \theta_k\right) \\
  & \propto \theta_i\T \hat{S} \left(\theta_i -
  2 \hat{S}^{-1} \Lambda_0 S^{-1} \sum_{k \in C} \Lambda_k \theta_k\right) \\
  & \propto
  \left(\theta_i - \hat{S}^{-1} \Lambda_0 S^{-1} \sum_{k \in C} \Lambda_k \theta_k\right)\T
  \hat{S}
  \left(\theta_i - \hat{S}^{-1} \Lambda_0 S^{-1} \sum_{k \in C} \Lambda_k \theta_k\right)\,.
\end{align*}
In the last two steps, we omit constants in $\theta_i$. Finally, by the Woodbury matrix identity, we have
\begin{align*}
  \hat{S}
  & = \Lambda_0 - \Lambda_0 S^{-1} \Lambda_0
  = (\Lambda_0^{-1} + (S - \Lambda_0)^{-1})^{-1}\,, \\
  \hat{S}^{-1} \Lambda_0 S^{-1}
  & = (\Lambda_0 - \Lambda_0 S^{-1} \Lambda_0)^{-1} \Lambda_0 S^{-1}
  = (S - \Lambda_0)^{-1}\,.
\end{align*}
This completes the proof, for $\tilde{\theta}_j = (S - \Lambda_0)^{-1} \sum_{k \in C} \Lambda_k \mu_k$ and $\tilde{\Lambda}_j = (\Lambda_0^{-1} + (S - \Lambda_0)^{-1})^{-1}$.
\end{proof}

For an action node $j$, note that \eqref{eq:linear induction} can be written as
\begin{align*}
  \condprob{H_{t, j}, \theta_{*, j} = \theta}{\theta_{*, i} = \theta_i}
  \propto \exp\left[- \frac{1}{2}
  \left((\theta - \theta_i)\T \Lambda_0 (\theta - \theta_i) +
  \sum_{k \in C} \theta\T \Lambda_k \theta -
  2 \sum_{k \in C} \theta\T \Lambda_k \theta_k)\right)\right]\,,
\end{align*}
when constants in $\theta$ and $\theta_i$ are omitted. Then $\Lambda_k = \sigma^{-2} X_k\T X_k$ and $\Lambda_k \theta_k = \sigma^{-2} X_k Y_k$, where $Y_k$ is an observation $k$ of node $j$ at feature vector $X_k$ and $\sigma$ is observation noise. This is the basis of the induction.

\clearpage

\section{Regret Bound Proofs}
\label{sec:regret bound proofs}

This section contains proofs of our regret bound and supporting lemmas.

\subsection{Proof of \cref{lem:bayes regret}}

Fix round $t$. Let $\condprob{\Theta}{H_t} = \cN(\Theta; \hat{\Theta}_t, \hat{\Sigma}_t)$ be the joint posterior distribution of all action node parameters $\Theta \in \realset^K$, with mean $\hat{\Theta}_t \in \realset^K$ and covariance $\hat{\Sigma}_t \in \realset^{K \times K}$. Let $A_t \in \set{0, 1}^K$ and $A_* \in \set{0, 1}^K$ be indicator vectors of the taken action in round $t$ and the optimal action, respectively. Each action is associated with one leaf node.

Since $\hat{\Theta}_t$ is deterministic given $H_t$, and $A_*$ and $A_t$ are i.i.d.\ given $H_t$, we have
\begin{align*}
  \E{}{A_*\T \Theta_* - A_t\T \Theta_*}
  = \E{}{\condE{A_*\T (\Theta_* - \hat{\Theta}_t)}{H_t}} +
  \E{}{\condE{A_t\T (\hat{\Theta}_t - \Theta_*)}{H_t}}\,.
\end{align*}
Moreover, $\Theta_* - \hat{\Theta}_t$ is a zero-mean random vector independent of $A_t$, and thus $\condE{A_t\T (\hat{\Theta}_t - \Theta_*)}{H_t} = 0$. So we only need to bound the first term above. Let
\begin{align*}
  E_t =
  \set{\forall a \in \cA: |a\T (\Theta_* - \hat{\Theta}_t)|
  \leq \sqrt{2 \log(1 / \delta)} \normw{a}{\hat{\Sigma}_t}}
\end{align*}
be the event that all high-probability confidence intervals hold. Fix history $H_t$. Then by the Cauchy-Schwarz inequality,
\begin{align*}
  \condE{A_*\T (\Theta_* - \hat{\Theta}_t)}{H_t}
  \leq \sqrt{2 \log(1 / \delta)} \, \condE{\normw{A_*}{\hat{\Sigma}_t}}{H_t} +
  \condE{A_*\T (\Theta_* - \hat{\Theta}_t)
  \I{\bar{E}_t}}{H_t}\,.
\end{align*}
Now note that for any action $a$, $a\T (\Theta_* - \hat{\Theta}_t) / \normw{a}{\hat{\Sigma}_t}$ is a standard normal variable. It follows that
\begin{align*}
  \condE{A_*\T (\Theta_* - \hat{\Theta}_t) \I{\bar{E}_t}}{H_t}
  \leq 2 \sum_{a \in \cA} \normw{a}{\hat{\Sigma}_t} \frac{1}{\sqrt{2 \pi}}
  \int_{u = \sqrt{2 \log(1 / \delta)}}^\infty
  u \exp\left[- \frac{u^2}{2}\right] \dif u
  \leq \sqrt{\frac{2}{\pi}} \sigma_{\max} K \delta\,,
\end{align*}
where we use that $\bar{E}_t$ implies $\maxnorm{\hat{\Sigma}^{- \frac{1}{2}}_t (\Theta_* - \hat{\Theta}_t)} \geq \sqrt{2 \log(1 / \delta)}$. Now we combine all inequalities and have
\begin{align*}
  \condE{A_*\T (\Theta_* - \hat{\Theta}_t)}{H_t}
  \leq \sqrt{2 \log(1 / \delta)} \, \condE{\normw{A_t}{\hat{\Sigma}_t}}{H_t} +
  \sqrt{\frac{2}{\pi}} \sigma_{\max} K \delta\,.
\end{align*}
We also used that $A_t$ and $A_*$ are i.i.d.\ given $H_t$.

Since the above bound holds for any history $H_t$, we combine everything and get
\begin{align*}
  \E{}{\sum_{t = 1}^n A_*\T \Theta_* - A_t\T \Theta_*}
  & \leq \sqrt{2 \log(1 / \delta)} \,
  \E{}{\sum_{t = 1}^n \normw{A_t}{\hat{\Sigma}_t}} +
  \sqrt{\frac{2}{\pi}} \sigma_{\max} K n \delta \\
  & \leq \sqrt{2 n\log(1 / \delta)} \,
  \E{}{\sqrt{\sum_{t = 1}^n \normw{A_t}{\hat{\Sigma}_t}^2}} +
  \sqrt{\frac{2}{\pi}} \sigma_{\max} K n \delta \\
  & \leq \sqrt{2 n \log(1 / \delta)}
  \sqrt{\E{}{\sum_{t = 1}^n \normw{A_t}{\hat{\Sigma}_t}^2}} +
  \sqrt{\frac{2}{\pi}} \sigma_{\max} K n \delta\,.
\end{align*}
The second step uses the Cauchy-Schwarz inequality and the third step uses the concavity of the square root.

Since $a$ is an indicator vector, $\normw{A_t}{\hat{\Sigma}_t}^2 = \sigma_{t, A_t}^2$ is the marginal posterior variance of the mean reward of action $A_t$ in round $t$. Likewise, $\sigma_{\max}$ is the maximum marginal prior width of the mean reward of a leaf node. This concludes the proof.

\subsection{Proof of \cref{lem:regret decomposition}}
\label{sec:regret decomposition proof}

Since round $t$ is fixed, we write $L$ instead of $L_t$ and refer to node $\psi_t(i)$ by $i$ for any $i \in [L_t]$.

Fix any node $i$. By the total variance decomposition,
\begin{align*}
  \condvar{\theta_i}{H_t}
  = \condE{\hat{\sigma}_{t, i}^2}{H_t} +
  \condvar{\hat{\theta}_{t, i}}{H_t}\,.
\end{align*}
For Gaussian random variables, $\hat{\sigma}_{t, i}^2$ is independent of $\theta_{i - 1}$, as shown in \eqref{eq:mab_posterior_root}. Therefore,
\begin{align*}
  \condE{\hat{\sigma}_{t, i}^2}{H_t}
  = \hat{\sigma}_{t, i}^2\,.
\end{align*}
For the second term, as shown in \eqref{eq:mab_posterior_root}, $\hat{\theta}_{t, i} = \hat{\sigma}_{t, i}^2 (\sigma_{0, i}^{-2} \theta_{i - 1} + c)$, where $c$ is a constant conditioned on $H_t$. Therefore,
\begin{align*}
  \condvar{\hat{\theta}_{t, i}}{H_t}
  = \frac{\hat{\sigma}_{t, i}^4}{\sigma_{0, i}^4} \condvar{\theta_{i - 1}}{H_t}\,.
\end{align*}
Now we chain all identities for node $i$ and get
\begin{align*}
  \condvar{\theta_i}{H_t}
  = \hat{\sigma}_{t, i}^2 +
  \frac{\hat{\sigma}_{t, i}^4}{\sigma_{0, i}^4} \condvar{\theta_{i - 1}}{H_t}\,.
\end{align*}
Finally, we apply the above identity recursively, from node $L$ all the way up to the root, and get our claim,
\begin{align*}
  \condvar{\theta_h}{H_t}
  & = \hat{\sigma}_{t, L}^2 +
  \frac{\hat{\sigma}_{t, L}^4}{\sigma_{0, L}^4} \condvar{\theta_{L - 1}}{H_t}
  = \hat{\sigma}_{t, L}^2 +
  \frac{\hat{\sigma}_{t, L}^4}{\sigma_{0, L}^4} \hat{\sigma}_{t, L - 1}^2 +
  \frac{\hat{\sigma}_{t, L}^4}{\sigma_{0, L}^4}
  \frac{\hat{\sigma}_{t, L - 1}^4}{\sigma_{0, L - 1}^4} \condvar{\theta_{L - 2}}{H_t} \\
  & = \sum_{i = 1}^L \left(\prod_{j = i + 1}^L
  \frac{\hat{\sigma}_{t, j}^4}{\sigma_{0, j}^4}\right)
  \hat{\sigma}_{t, i}^2\,.
\end{align*}
This completes the proof.

\subsection{Proof of \cref{lem:posterior lower bound}}
\label{sec:posterior lower bound proof}

Since round $t$ is fixed, we write $L$ instead of $L_t$ and refer to node $\psi_t(i)$ by $i$ for any $i \in [L_t]$.

Node $i + 1$ is the only child of node $i$ where the posterior between rounds $t$ and $t + 1$ changes. Thus we have by \eqref{eq:mab_posterior_root} that
\begin{align}
  \hat{\sigma}_{t + 1, i}^{-2} - \hat{\sigma}_{t, i}^{-2}
  = \tilde{\sigma}_{t + 1, i + 1}^{-2} - \tilde{\sigma}_{t, i + 1}^{-2}\,.
  \label{eq:root lower bound}
\end{align}
We decompose $\tilde{\sigma}_{t + 1, i + 1}^{-2} - \tilde{\sigma}_{t, i + 1}^{-2}$ next.

Choose any node $k > i$ except for a leaf, and let node $k + 1$ be its only child where the posterior between rounds $t$ and $t + 1$ changes. Let $a = \sum_{j \in \children(k)} \tilde{\sigma}_{t, j}^{-2}$ and $\varepsilon = \tilde{\sigma}_{t + 1, k + 1}^{-2} - \tilde{\sigma}_{t, k + 1}^{-2}$. We apply the recursive decomposition in \eqref{eq:mab_likelihood} and get
\begin{align}
  \tilde{\sigma}_{t + 1, k}^{-2} - \tilde{\sigma}_{t, k}^{-2}
  & = \frac{1}{\sigma_{0, k}^2 + (a + \varepsilon)^{-1}} -
  \frac{1}{\sigma_{0, k}^2 + a^{-1}}
  = \frac{a^{-1} - (a + \varepsilon)^{-1}}
  {(\sigma_{0, k}^2 + a^{-1}) (\sigma_{0, k}^2 + (a + \varepsilon)^{-1})}
  \nonumber \\
  & = \frac{\varepsilon}{a (a + \varepsilon)}
  \frac{1}{(\sigma_{0, k}^2 + a^{-1}) (\sigma_{0, k}^2 + (a + \varepsilon)^{-1})}
  = \frac{\varepsilon}
  {\sigma_{0, k}^4 (\sigma_{0, k}^{-2} + a) (\sigma_{0, k}^{-2} + a + \varepsilon)}
  \nonumber \\
  & \geq \frac{1}{c} \frac{\hat{\sigma}_{t, k}^4}{\sigma_{0, k}^4} \varepsilon\,,
  \label{eq:node lower bound}
\end{align}
where $c$ is defined in the claim.

Finally, let node $k$ be a leaf node with $n$ observations by round $t$. We apply \eqref{eq:mab_likelihood_leaf} and get
\begin{align}
  \tilde{\sigma}_{t + 1, k}^{-2} - \tilde{\sigma}_{t, k}^{-2}
  & = \frac{1}{\sigma_{0, k}^2 + \sigma^2 / (n + 1)} -
  \frac{1}{\sigma_{0, k}^2 + \sigma^2 / n}
  = \frac{\sigma^2 (n^{-1} - (n + 1)^{-1})}
  {(\sigma_{0, k}^2 + \sigma^2 / n) (\sigma_{0, k}^2 + \sigma^2 / (n + 1))}
  \nonumber \\
  & = \frac{\sigma^2}{n (n + 1)}
  \frac{1}{(\sigma_{0, k}^2 + \sigma^2 / n) (\sigma_{0, k}^2 + \sigma^2 / (n + 1))}
  = \frac{\sigma^{-2}} {\sigma_{0, k}^4
  (\sigma_{0, k}^{-2} + \sigma^{-2} n) (\sigma_{0, k}^{-2} + \sigma^{-2} (n + 1))}
  \nonumber \\
  & \geq \frac{1}{c} \frac{\hat{\sigma}_{t, k}^4}{\sigma_{0, k}^4} \sigma^{-2}\,,
  \label{eq:leaf lower bound}
\end{align}
where $c$ is defined in the claim.

Finally, we apply \eqref{eq:node lower bound} to \eqref{eq:root lower bound}, recursively until the leaf, where we apply \eqref{eq:leaf lower bound}. This completes the proof.

\subsection{Proof of \cref{lem:posterior scaling}}
\label{sec:posterior scaling proof}

Since round $t$ is fixed, we write $L$ instead of $L_t$ and refer to node $\psi_t(i)$ by $i$ for any $i \in [L_t]$.

Let node $i$ be a leaf node with $n$ observations by round $t$. Then
\begin{align*}
  c
  = \frac{\hat{\sigma}_{t + 1, i}^{-2}}{\hat{\sigma}_{t, i}^{-2}}
  = \frac{\sigma_{0, i}^{-2} + \sigma^{-2} (n + 1)}{\sigma_{0, i}^{-2} + \sigma^{-2} n}
  = 1 + \frac{\sigma^{-2}}{\sigma_{0, i}^{-2} + \sigma^{-2} n}
  \leq 1 + \frac{\sigma^{-2}}{\sigma_{0, i}^{-2}}
  = 1 + \frac{\sigma_{0, i}^2}{\sigma^2}\,.
\end{align*}
More generally, for any node $i$, we have by \eqref{eq:mab_posterior_root} that
\begin{align*}
  c
  = \frac{\hat{\sigma}_{t + 1, i}^{-2}}{\hat{\sigma}_{t, i}^{-2}}
  = \frac{\sigma_{0, i}^{-2} + a + \varepsilon}{\sigma_{0, i}^{-2} + a}
  = 1 + \frac{\varepsilon}{\sigma_{0, i}^{-2} + a}
  \leq 1 + \sigma_{0, i}^2 \varepsilon\,,
\end{align*}
where $a = \sum_{j \in \children(i)} \tilde{\sigma}_{t, j}^{-2}$ and $\varepsilon = \tilde{\sigma}_{t + 1, i + 1}^{-2} - \tilde{\sigma}_{t, i + 1}^{-2}$. We bound $\varepsilon$ recursively as in the proof of \cref{lem:posterior lower bound}. The difference is that we need an upper bound now.

For any node $k > i$ except for a leaf, a change in the very last step of \eqref{eq:node lower bound} yields
\begin{align*}
  \tilde{\sigma}_{t + 1, k}^{-2} - \tilde{\sigma}_{t, k}^{-2}
  = \frac{\varepsilon}
  {\sigma_{0, k}^4 (\sigma_{0, k}^{-2} + a) (\sigma_{0, k}^{-2} + a + \varepsilon)}
  \leq \varepsilon
  = \tilde{\sigma}_{t + 1, k + 1}^{-2} - \tilde{\sigma}_{t, k + 1}^{-2}\,.
\end{align*}
Finally, let node $k$ be a leaf node with $n$ observations by round $t$. A change in the very last step of \eqref{eq:leaf lower bound} yields
\begin{align*}
  \tilde{\sigma}_{t + 1, k}^{-2} - \tilde{\sigma}_{t, k}^{-2}
  & = \frac{\sigma^{-2}} {\sigma_{0, k}^4
  (\sigma_{0, k}^{-2} + \sigma^{-2} n) (\sigma_{0, k}^{-2} + \sigma^{-2} (n + 1))}
  \leq \sigma^{-2}\,.
\end{align*}
This implies that
\begin{align*}
  c
  = \frac{\sigma_{t + 1, i}^{-2}}{\sigma_{t, i}^{-2}}
  \leq 1 + \sigma_{0, i}^2 \varepsilon
  \leq 1 + \frac{\sigma_{0, i}^2}{\sigma^2}\,.
\end{align*}
Under the assumption that $\sigma \geq \sigma_{0, i}$ for all nodes $i$, we get $c \leq 2$. This completes the proof.

\subsection{Proof of \cref{thm:regret bound}}
\label{sec:regret bound proof}

First, we apply \cref{lem:bayes regret} and get that
\begin{align*}
  \Bregret(n)
  \leq \sqrt{2 n \cV(n) \log(1 / \delta)} +
  \sqrt{2 / \pi} \sigma_{\max} K n \delta\,,
\end{align*}
where $\cV(n) = \E{}{\sum_{t = 1}^n \sigma_{t, A_t}^2}$ and $\sigma_{t, A_t}^2 = \condvar{\theta_{A_t}}{H_t}$ is the marginal posterior variance in node $A_t$ in round $t$. We prove a worst-case upper bound on $\cV(n)$ below.

We start with a worst-case upper in any round $t$. Since round $t$ is fixed, we write $L$ instead of $L_t$ and refer to node $\psi_t(i)$ by $i$ for any $i \in [L_t]$. Let $\displaystyle s_i = \prod_{j = i + 1}^L \frac{\hat{\sigma}_{t, j}^4}{\sigma_{0, j}^4}$. Then by \cref{lem:regret decomposition},
\begin{align*}
  \hat{\sigma}_{t, A_t}^2
  & = \sigma^2 \frac{\hat{\sigma}_{t, A_t}^2}{\sigma^2}
  = \sigma^2 \sum_{i = 1}^L s_i
  \frac{\hat{\sigma}_{t, i}^2}{\sigma^2}
  \leq \sigma^2 \sum_{i = 1}^L c_i
  \log\left(1 + s_i \frac{\hat{\sigma}_{t, i}^2}{\sigma^2}\right)\,,
\end{align*}
where $c_i$ is a universal upper bound for node $i$ derived as
\begin{align*}
  \frac{s_i \frac{\hat{\sigma}_{t, i}^2}{\sigma^2}}
  {\log\left(1 + s_i \frac{\hat{\sigma}_{t, i}^2}{\sigma^2}\right)}
  \leq \frac{\frac{\hat{\sigma}_{t, i}^2}{\sigma^2}}
  {\log\left(1 + \frac{\hat{\sigma}_{t, i}^2}{\sigma^2}\right)}
  \leq \frac{\sigma_{0, i}^2}
  {\sigma^2 \log\left(1 + \frac{\sigma_{0, i}^2}{\sigma^2}\right)}
  = c_i\,.
\end{align*}
The first inequality holds because $s_i \leq 1$, and $a x / \log(1 + a x) \leq x / \log(1 + x)$ for any $a \in [0, 1]$ and $x > 0$. The second inequality holds because $x / \log(1 + x)$ is maximized when $x$ is, which happens at $\hat{\sigma}_{t, i} = \sigma_{0, i}$.

Now we choose $c \geq 1$ and note that for any node $i \in [L]$,
\begin{align*}
  \log\left(1 + s_i \frac{\hat{\sigma}_{t, i}^2}{\sigma^2}\right)
  = c^{L - i} c^{i - L}
  \log\left(1 + s_i \frac{\hat{\sigma}_{t, i}^2}{\sigma^2}\right)
  \leq c^{L - i}
  \log\left(1 + c^{i - L} s_i \frac{\hat{\sigma}_{t, i}^2}{\sigma^2}\right)\,,
\end{align*}
where the inequality holds because $a \log(1 + x) \leq \log(1 + a x)$ for any $a \in [0, 1]$ and $x > 0$. Moreover,
\begin{align*}
  \log\left(1 + c^{i - L} s_i \frac{\hat{\sigma}_{t, i}^2}{\sigma^2}\right)
  = \log\left(\hat{\sigma}_{t, i}^{-2} + \frac{c^{i - L} s_i}{\sigma^2}\right) -
  \log(\hat{\sigma}_{t, i}^{-2})
  \leq \log(\hat{\sigma}_{t + 1, i}^{-2}) - \log(\hat{\sigma}_{t, i}^{-2})\,,
\end{align*}
where the last step follows from \cref{lem:posterior lower bound}. Now we chain all inequalities, switch to the full notation, and get
\begin{align*}
  \hat{\sigma}_{t, A_t}^2
  \leq \sigma^2 \sum_{i = 1}^{L_t} c^{L_t - i} c_{\psi_t(i)}
  [\log(\hat{\sigma}_{t + 1, \psi_t(i)}^{-2}) - \log(\hat{\sigma}_{t, \psi_t(i)}^{-2})]\,.
\end{align*}
Finally, we sum up the above upper bound over all rounds $t$. Let $h_i$ be the maximum length of any path from node $i$ to its descendant. Since $c \geq 1$ and because of telescoping in the above decomposition, we get
\begin{align*}
  \cV(n)
  \leq \sigma^2 \sum_{i \in \cN} c^{h_i} c_i
  [\log(\hat{\sigma}_{n + 1, i}^{-2}) - \log(\sigma_{0, i}^{-2})]
  = \sigma^2 \sum_{i \in \cN} c^{h_i} c_i
  \log(\sigma_{0, i}^2 \hat{\sigma}_{n + 1, i}^{-2})\,.
\end{align*}
When node $i$ is a leaf, $\hat{\sigma}_{n + 1, i}^{-2} \leq \sigma_{0, i}^{-2} + \sigma^{-2} n$, and thus
\begin{align*}
  \log(\sigma_{0, i}^2 \hat{\sigma}_{n + 1, i}^{-2})
  \leq \log\left(1 + \frac{\sigma_{0, i}^2 n}{\sigma^2}\right)\,.
\end{align*}
When node $i$ is not a leaf, $\hat{\sigma}_{n + 1, i}^{-2} \leq \sigma_{0, i}^{-2} + \sum_{j \in \children(i)} \sigma_{0, j}^{-2}$, and thus
\begin{align*}
  \log(\sigma_{0, i}^2 \hat{\sigma}_{n + 1, i}^{-2})
  \leq \log\left(1 + \sigma_{0, i}^2 \sum_{j \in \children(i)} \sigma_{0, j}^{-2}\right)\,.
\end{align*}
This completes the proof.

\end{document}